\tikzset{
 anode/.style={rectangle, draw, minimum size = 12pt, inner sep = 4pt},
 snode/.style={rectangle split, rectangle split parts=2, draw, rectangle split horizontal, minimum size=18pt, inner sep=4pt, text=black},
>=Stealth}
\newtheorem{theorem}{Theorem}
\newcommand{\BibTeX}{B\kern-.05em{\sc i\kern-.025em b}\kern-.08em\TeX}
\newcommand{\leftt}{\textrm{left}}
\newcommand{\rightt}{\textrm{right}}
\newcommand{\weight}{\textrm{value}}
\begin{document}


\begin{frontmatter}




\title{Vanilla Gradient Descent for Oblique Decision Trees}


\author[A]{\fnms{Subrat Prasad}~\snm{Panda}\thanks{Corresponding Author. Email: subratpr001@e.ntu.edu.sg}}
\author[B]{\fnms{Blaise}~\snm{Genest}}
\author[A]{\fnms{Arvind}~\snm{Easwaran}}
\author[C]{\fnms{Ponnuthurai Nagaratnam}~\snm{Suganthan}} 

\address[A]{NTU, Singapore}
\address[B]{CNRS, IPAL, France and CNRS@CREATE, Singapore} 
\address[C]{KINDI Computing Research, Qatar University, Qatar}


\begin{abstract}
Decision Trees (DTs) constitute one of the major highly non-linear AI models, valued, e.g., for their efficiency on tabular data. 
Learning accurate DTs is, however, complicated, especially for oblique DTs, and does take a significant training time. Further, DTs suffer from overfitting, e.g., they proverbially "do not generalize" in regression tasks.
Recently, some works proposed ways to make (oblique) DTs differentiable.
This enables highly efficient gradient-descent algorithms to be used to learn DTs. It also enables generalizing capabilities by learning regressors at the leaves simultaneously with the decisions in the tree. Prior approaches to making DTs differentiable rely either on probabilistic approximations at the tree's internal nodes (soft DTs) or on approximations in gradient computation at the internal node (quantized gradient descent). In this work, we propose \textit{DTSemNet}, a novel \textit{sem}antically equivalent and invertible encoding for (hard, oblique) DTs as Neural \textit{Net}works (NNs), that uses standard vanilla gradient descent. Experiments across various classification and regression benchmarks show that oblique DTs learned using \textit{DTSemNet} are more accurate than oblique DTs of similar size learned using state-of-the-art techniques. Further, DT training time is significantly reduced. We also experimentally demonstrate that \textit{DTSemNet} can learn DT policies as efficiently as NN policies in the Reinforcement Learning (RL) setup with physical inputs (dimensions $\leq32$). The code is available at {\color{blue}\textit{\url{https://github.com/CPS-research-group/dtsemnet}}}.
\end{abstract}

\end{frontmatter}

\section{Introduction}\label{sec:intro}

DTs are widely adopted in several domains, such as the medical domain \citep{chen2017machine}.
Studies, including recent ones \citep{grinsztajn2022tree}, show that DTs classify tabular datasets very well due to their inductive bias toward learning non-smooth functions, even better than NNs. 
Learning DTs for a given task is, however, a complex task: due to the combinatorial explosion over choices of the decisions to branch over at every node of the tree, learning optimal DTs is an NP-hard problem \citep{laurent1976constructing}. 

Current methods of learning DTs can be broadly classified into four approaches: (a) greedy optimization, which grows trees using split criteria such as Classification and Regression Trees (CART) \citep{breiman1984cart}, (b) non-greedy optimization, which jointly optimize decision nodes under global objective such as Tree Alternating Optimization (TAO) \citep{carreira2018alternating, tao-regression}), (c) Global searches, either presented as Mixed Integer Programs (MIP) \citep{bennett1996optimal}, or using Evolutionary Algorithms (EA), e.g. \citep{ea2023}, and (d) Gradient Descent by making the tree differentiable \citep{hazimeh2020tree,yang2018deep,zantedeschi2021learning}.
Greedy optimization techniques usually learn low-performing trees \citep{zantedeschi2021learning, dgt2022}, global searches are computationally expensive when the number of possible tree configurations is large \citep{bertsimas2017optimal, ea2023} and non-greedy approaches, although computationally better, are still computationally more expensive than gradient-based approaches \cite{dgt2022}. Moreover, training without gradient descent in the RL setting is very challenging, as there is no dataset: the agent learns from experience collected in an environment. A workaround is first to learn an NN using gradient descent, and imitate the NN (treated as an oracle) with a (classification-) DT using greedy strategies such as CART, which works reasonably well for simple RL problems with a small 
(up to $\approx 6$ features) {\em discrete} action space \cite{bastani2018verifiable}. On the other hand, recent results with the gradient-based approach have shown to be efficient in terms of training time and accuracy for both classification and regression tasks \citep{dgt2022}. Additionally, it is capable of directly learning (regression-) DT policies in RL settings with {\em continuous} actions \cite{icct2022}.

To apply gradient descent at the training stage on DTs, most previous works adopt ``soft'' decisions (e.g., using the \texttt{Sigmoid} activation function) to make the tree differentiable \cite{lee2019oblique,biau2019neural,zantedeschi2021learning,hazimeh2020tree}. However, the tree obtained does not provide ``hard'' decisions but ``soft'', that is, probabilistic ones. Hard DTs can be obtained from soft DTs, but accuracy decreases. Some recent studies, such as Dense Gradient Trees (DGT) \cite{dgt2022} and Interpretable Continous Control Trees (ICCT) \cite{icct2022}, have introduced an alternative approach to obtain hard DTs, using an approximation during backpropagation to compute gradients using Straight Through Estimators (STEs) \cite{bengio2013estimating,hubara2016binarized}. This approximation can hamper DT training, especially in large datasets or RL, where errors may accumulate over many training steps.

In this work, we propose a powerful novel encoding of oblique DTs as NNs, namely the {\em DTSemNet} architecture, which overcomes the aforementioned shortcomings. It uses \texttt{ReLU} activation functions and linear operations, making it differentiable and allowing gradient descent to be applied to learn the structure. The encoding is semantically equivalent to a (hard) oblique DT architecture, such that decisions (weights) in the DT correspond one-to-one to the trainable weights in the NN.
The other weights are fixed (non-trainable) in the architecture. The primary use of {\em DTSemNet} is as a classifier; that is, it provides a class associated with a given input, using a typical \texttt{argmax} operation to select the class associated with the highest output. We show in Theorem \ref{th1} that {\em DTSemNet} is exactly equivalent to a DT; that is, for every input vector, the output class produced by {\em DTSemNet} and by the DT are the same. It has the potential for application in many different contexts to learn DTs. Compared with DGT \cite{dgt2022}, and ICCT \cite{icct2022}, both needing {\em many} STE calls, {\em DTSemNet} uses \texttt{ReLU} and {\em standard} gradient descent
without STE approximation. We propose a simple extension to {\em DTSemNet} for {\em regression tasks}, which requires {\em one} STE call to combine the choice of value with the choice of the leaf. Each leaf is associated with a regressor whose parameters are learned simultaneously with the DT decisions via (standard) gradient descent. This process results in DTs that generalize, aided by the regressors at the leaf, similar to \cite{icct2022},\cite{tao-regression}.

We experimented with {\em DTSemNet} in various benchmarks, both for classification and regression tasks, and compared its performance against different methods producing hard DTs: DGT\cite{dgt2022} for gradient descent-based, TAO \citep{carreira2018alternating, tao-regression} for non-greedy algorithm, CRO-DT \cite{ea2023} for global-searches and CART as the standard for greedy algorithms.
The classification and regression tasks mostly involve datasets with tabular data, except for MNIST, a small-sized image dataset. The number of features varies from 4 to 780 (outlier for MNIST), and the number of classes varies from 2 to 26. For classification tasks, {\em DTSemNet} achieves the best performance on every dataset, showing the efficiency of our proposed {\em unapproximated} methodology. For regression tasks, {\em DTSemNet} is either the best performing or second (e.g., beaten by TAO). One plausible cause of this discrepancy is the STE approximation used for {\em regression tasks} to combine the decision in the tree with the value computed at the nodes. This approximation sometimes hurts efficiency, whereas no such approximation for classification results in optimal performance. In both cases, DT learning time is much shorter using gradient descent than competing solutions.

Last, we explain how to deploy {\em DTSemNet} to learn DT policies in RL environments, both with discrete action spaces ({\em DTSemNet}-classification) and with continuous action spaces ({\em DTSemNet}-regression). 
We accomplish this by simply replacing the NN architecture with the {\em DTSemNet} architecture in RL learning pipelines (e.g., PPO \cite{schulman2017proximal}), to obtain DT policies having performance comparable to that of NNs. Experimentation on RL environments with {\em discrete} action spaces of different sizes (up to 10 actions and 32 input dimensions) shows that {\em DTSemNet} matches NNs. Further, on the more complicated benchmarks, it significantly outperforms competing solutions producing DT policies, either obtained by gradient descent on architectures encoding DTs or by imitation learning of NNs.
For {\em continuous} action spaces, provided that the number of input dimensions is limited,  ICCT already provides performance matching NN by using regressors \cite{icct2022}, although it generates less expressive axis-aligned DTs. This suggests that very accurate decisions in internal nodes are not as important when regressors can be used in the leaves of the DT. We experimentally verified that in two environments with continuous action spaces: as with discrete actions, {\em DTSemNet} outperforms competing solutions, but the architecture is less important for performance.

\smallskip
\noindent 
\textbf{Main contributions} are as follows:
\begin{itemize}
 \item We introduce the {\em DTSemNet} architecture, which we prove to be semantically equivalent to DTs. 
 \item Learning a {\em DTSemNet} (-classification) with standard gradient descent corresponds exactly to learning a DT, without resorting to any approximations, unlike competing methods. Experimentally, this results in the most efficient DTs for classification tasks.
 \item We extend {\em DTSemNet} to regression tasks, using one STE approximation to combine the binary decisions with an output regression: experimentally, {\em DTSemNet}-regression is sometimes the best method, sometimes (a close) second.
 \item  We explain how to use {\em DTSemNet} in the RL setting, again producing experimentally the best DT policies, by a large margin for discrete actions ({\em DTSemNet}-classification), and small margins for continuous action spaces ({\em DTSemNet}-regression).
 \end{itemize}

\section{Related Work}\label{related_work}
\textbf{Non-Gradient-Based DT Training:} There are many approaches to learning DTs without relying on gradients. CART \citep{breiman1984cart} (and its extensions) is a well-known method for training ({\em axis-aligned}) DTs, which is based on splitting the dataset at each node based on certain features using a metric, such as entropy or Gini impurity. 
Concerning learning {\em oblique} DTs in this way, some methods have been proposed, such as Oblique Classifier 1 (OC1) \cite{murthy1993oc1} or GUIDE \cite{loh2014fifty}. They typically do not produce well-performing oblique DTs since learning oblique DT is much harder than learning axis-aligned DTs.
TAO is currently a state-of-the-art (SOTA) method that can be used to learn oblique DTs. It enhances the performance of DT obtained by CART (or random DT of given depth) by alternatively fine-tuning node parameters at specific depths \cite{carreira2018alternating, tao-regression}. 

Concerning MIP formulations \cite{bennett1996optimal, bertsimas2017optimal} 
or Global EA-based search approaches, such as CRO-DT \cite{ea2023}, they learn DTs by searching over various structures of DTs but at high computational costs, which is impractical for large DTs and datasets. 
The proposed {\em DTSemNet} overcomes these challenges by using gradient-descent to lower training time, which we confirm by comparing with CRO-DT \cite{ea2023}, which proposes matrix encoding of (oblique) DTs to speed up training compared to previous EA-based methods, and produces axis-aligned DTs.

\smallskip
\noindent
\textbf{Gradient-Based DT Training:}
Numerous works propose to approximate DTs as soft-DTs to use gradient descent for learning DTs, where decision nodes typically use the \texttt{Sigmoid} function \cite{zantedeschi2021learning,hazimeh2020tree,yang2018deep,frosst2017distilling,tanno2019adaptive,biau2019neural,silva2020_softDT,ding2021cdt,qiu2021programmatic,policy_tree,neural_forestICCV,samuel,wan2020nbdt}. {\em Hardening} soft DTs, 
that is, transforming them into hard DTs by discretizing the probabilities induces severe inaccuracies \cite{icct2022}.
More closely related to our work, DGT \cite{dgt2022} represents (oblique) DTs as an NN-architecture using the (non-differentiable) sign activation function, resorting to {\em quantized} gradient descent to learn it, leveraging principles from training binarized NNs using STE \cite{hubara2016binarized}. 
Specifically, during forward propagation, nodes utilize a 0-1 step function, whereas, during backward propagation, nodes employ a piecewise linear function or some differentiable approximation (see [22]). Similarly, ICCT \cite{icct2022} learns (axis-aligned) DTs using NNs with the \texttt{Sigmoid} activation function and STEs. In all these works, the hard DTs that are produced are (slightly) different from the DT (soft DT or using STE), which is being optimized by gradient descent. In contrast, the {\em DTSemNet} architecture using ReLU activation functions allows standard gradient descent to be performed, and the output DT from {\em DTSemNet}-classification is exactly the same as the function optimized by gradient-descent, without approximation. Experiments confirm that it is more accurate in practice, significantly so for classification tasks.

\smallskip
\noindent
\textbf{Training DT policies in RL Setup:}
Hard or soft DT policies can be obtained via imitation learning \cite{imitationlearning}, i.e., learning from expert policies, usually pretrained NNs \cite{bastani2018verifiable,jhunjhunwala,liu2019toward,bewley2021tripletree,roth2019conservative,verma2018programmatically,verma2019imitation,coppens2019distilling}. For instance, VIPER \cite{bastani2018verifiable} imitates a Q-network (or policy network) by creating a dataset from collected samples and trains a DT using CART, with sample weightage assigned based on Q-values. In contrast, {\em DTSemNet} directly learns a hard oblique DT in RL (using PPO \cite{schulman2017proximal}). 
Other works, such as ProLoNet \cite{prolonets}, that learn {\em soft} DTs using the RL framework, with the objective of initializing weights from expert humans. In contrast, {\em DTSemNet} learns a {\em hard} DT. 
ICCT \cite{icct2022} proposed an STE-based approach to learn axis-aligned DTs using gradient descent. By comparison, we can handle oblique trees, which are more expressive and accurate than axis-aligned DTs, especially for discrete actions.

\section{The {\em DTSemNet} Architecture}

\label{dtsement}

We now describe our main contribution, namely {\em \color{blue} DTSemNet}, which encodes decision trees ({\em \color{blue} DT}) as deep neural {\em \color{blue} Net}works (DNNs) in a {\em \color{blue} Sem}antically equivalent way. We prove the semantic equivalence in Theorem \ref{th1} thereafter.

Consider a (binary) DT $T$ over input space $X=\mathbb{R}^n$ of dimension $n$. 
It has a set of internal nodes $T_0, \cdots T_k$,
and leaves $T_{k+1}, \ldots, T_m$, with associated functions 
$\leftt, \rightt: \{T_0, \cdots T_k\} \rightarrow \{T_1, \cdots T_m\}$ associating 
each internal node with its respective left (or right) child. 
Each internal node $T_i$ is also associated with a decision described by a (linear) 
vector $A_i \in \mathbb{R}^{n} \setminus \{(0,\cdots, 0)\}$
and an (affine) component $b_i \in \mathbb{R}$, such that the decision from input $x \in X$ is to go to the right decision node iff $A_i x + b_i > 0$, and left if 
$A_i x + b_i < 0$ \footnote{Note that the case where
$A_i x + b_i = 0$ is not important: The set of initial points where $A_i x + b_i = 0$ for some $i$ have zero measure and thus zero probability happening. 
We consider that input space $X$ excludes such points.}.
There is always a unique path between a node and the root.
Using this path, we associate each leaf $T_j$ of $T$ with a set of decisions.
For instance, in Figure \ref{fig1}, leaf $T_5$ is associated with decisions $-D_0, -D_1, D_3$,
encoding the fact that $T_5$ is the right child of $T_3$ (it needs $D_3$ to be true, i.e.
$A_3 x + b_3 >0$), itself being the left child of node $T_1$ (and thus it needs 
$A_1 x + b_1 < 0$, i.e. $-A_1 x - b_1 > 0$, that we denote "$-D_1$"), with $T_1$ the left child of root $T_0$. Given an input $x\in X$, leaf $T_j$ is selected if and only if all the associated decisions are true.
Given an input $x \in X$, there is exactly one leaf for which all associated decisions are true, and for the other leaves, at least one associated decision is false.

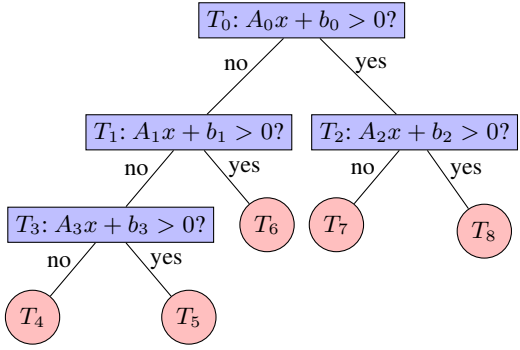
\begin{figure}[t]
  \centering
      \begin{tikzpicture}
        [ node/.style={ draw,  rectangle }, ]
    
        \node [node,fill=blue!25] (A) {$T_0$: $A_0 x + b_0 > 0$?};
        \path (A) ++(-135:21mm) node [node,fill=blue!25] (B) {$T_1$: $A_1 x + b_1 > 0$?};
        \path (A) ++(-45:21mm) node [node,fill=blue!25] (C) {$T_2$: $A_2 x + b_2 > 0$?};
        \path (B) ++(-130:16mm) node [node,fill=blue!25] (D) {$T_3$: $A_3 x + b_3 > 0$?};
    
        \path (B) ++(-50:16mm) node [node,circle,fill=red!25] (E) {$T_6$};
        \path (D) ++(-130:16mm) node [node,circle,fill=red!25] (F) {$T_4$};
        \path (D) ++(-50:16mm) node [node,circle,fill=red!25] (G) {$T_5$};
        \path (C) ++(-130:16mm) node [node,circle,fill=red!25] (H) {$T_7$};
        \path (C) ++(-55:16mm) node [node,circle,fill=red!25] (I) {$T_8$};
    
        \draw (A) -- (B) node [left,pos=0.35] {no}(A);
        \draw (A) -- (C) node [right,pos=0.35] {yes}(A);
        \draw (B) -- (D) node [left,pos=0.35] {no}(A);
        \draw (B) -- (E) node [right,pos=0.35] {yes}(A);
        \draw (D) -- (F) node [left,pos=0.35] {no}(A);
        \draw (D) -- (G) node [right,pos=0.35] {yes}(A);
        \draw (C) -- (H) node [left,pos=0.35] {no}(A);
        \draw (C) -- (I) node [right,pos=0.35] {yes}(A);

    \end{tikzpicture}
    \caption{A Decision Tree $T$ with 4 internal decision nodes $T_0, \ldots, T_3$
    and 5 leaves $T_4, \ldots, T_8$.}
    \label{fig1}
\end{figure}

Finding suitable values for $A_i$ and $b_i$ to make the DT $T$ efficient is challenging.
We explain in the following how to find such values using standard gradient descent algorithms.
For that, we encode the DT $T$ as the following NN architecture {\em DTSemNet},
which is semantically equivalent with $T$, and which can be converted back to an equivalent 
DT $T'$ after learning. We now describe the {\em DTSemNet} architecture:

\begin{itemize}

  \item The input layer is made of $n$ inputs $x_1, \ldots, x_n$ for the input domain $X$ of dimension $n$, plus one input $b$ for the biases, always set to weight $1$.
  
  \item The first hidden layer is a layer with $k$ nodes $I_1, \ldots I_k$, one per internal node $T_1, \ldots, T_k$ of the tree. It is fully connected with the initial layer. The weights to node $I_i$ correspond one-to-one to the decisions $D_i$, that is, to $A_i$ for the weights from $x_1, \ldots, x_n$ and to $b_i$ from $b$. The activation function is linear.
  
  \item The second hidden layer is made of $2k$ nodes $\top_i,\bot_i$, two per internal node 
  $T_i, i \leq k$ of the tree, standing for decision $D_i$ being true ($\top_i$), or $-D_i$ being true ($\bot_i$). The activation function is ReLU. The weights are fixed, with weight 1 from $I_i$ to $\top_i$ and $-1$ from $I_i$ to $\bot_i$.
  
  \item The output is made of $m-k$ nodes $L_{k+1}, \ldots, L_m$, one per leaf 
  $T_{k+1}, \ldots, T_m$ of the decision tree.

 The weights from $\bot_i,\top_i$ to $L_j$ can be 0 or 1, depending on the structure of the DT: $L_j$ takes weight $0$ from $\top_i$ if leaf $T_j$ is a left descendant of node $T_i$ (it needs $D_i$ to be false), otherwise it takes weight 1 from $\top_i$ (that is if either it is a right descendant of node $T_i$ or it is not a descendant of $T_i$ at all). Similarly, it takes 
 weight $0$ from $\bot_i$ if leaf $T_j$ is a right descendant of node $T_i$. Otherwise, it takes weight 1 from $\bot_i$.
The activation function is linear.
  \end{itemize}

The {\em DTSemNet} NN has $n+1+m+2k$ nodes, it has depth $4$ (2 linear activation functions and 1 ReLU activation function), with $k (n+1)$ trainable weights, exactly the same number as the number of dimensions of decisions in the DT (the $A_i$'s and $b_i$'s). We display in Fig.~\ref{fig2} the {\em DTSemNet} associated with the DT of Fig.~\ref{fig1}. 

The rationale behind the architecture is the following: for $\weight_x(X)$ the value of node $X$ when the NN input is $x$, we want to ensure that $\weight_x(L_\ell) = \sum_j \weight_x(\bot_j) + \weight_x(\top_j)$ for the leaf $T_\ell$ such that all the associated decisions are true (thus selected by the DT), and 
$\weight_x(L_{\ell'}) < \sum_j \weight_x(\bot_j)) + \weight_x(\top_j)$ for every other leaf $T_{\ell'} \neq T_\ell$, so that $L_\ell$ is selected by the 
{\em DTSemNet}. This allows to prove the semantic equivalence of the DT and the {\em DTSemNet} architecture:

\begin{figure}[t]
\center
\begin{tikzpicture}
  [shorten >=1pt,->,draw=black!50, node distance=\layersep]
    \tikzstyle{every pin edge}=[<-,shorten <=1pt]
    \tikzstyle{neuron}=[circle,fill=black!25,minimum size=17pt,inner sep=0pt]
    \tikzstyle{input neuron}=[neuron, fill=green!50]
    \tikzstyle{output neuron}=[neuron, fill=red!25]
    \tikzstyle{hidden neuron}=[circle,fill=blue!25,minimum size=17pt,inner sep=0pt]
    \tikzstyle{annot} = [text width=3em, text centered]
    
    \def\layersep{2.2cm}

    \foreach \name / \y in {1,...,3}
        \node[input neuron, pin=left: $x_\y$] (I-\name) at (0, 0.5cm - 1.5*\y cm) {$x_\y$};

     \node[input neuron, pin=left: 1] (I-4) at (0,- 5.5 cm) {$b$};

    \foreach \name / \y in {0,...,3}
        \path[yshift=0.5cm]
            node[hidden neuron] (HH-\name) at (\layersep,-1.5cm -1.5 * \y cm) {$I_\y$};

    \foreach \name / \y in {0,...,3}
        \path[yshift=0.5cm]
            node[hidden neuron] (H-\name) at (2*\layersep,-1.15cm -1.5*\y cm) {$\top_\y$};

      \foreach \name / \y in {0,...,3}
          \path[yshift=0.5cm]
                node[hidden neuron] (B-\name) at (2*\layersep,-1.9 cm -1.5*\y cm) {$\bot_\y$};


    \foreach \name / \y in {4,...,8}
    \path[yshift=0.5cm]
    node[output neuron] (O-\name) at (3*\layersep,3.5 cm -1.2*\y cm) {$L_\y$};

    \foreach \source in {1,...,3}
            \draw(I-\source) edge (HH-0);

            \draw(I-4) edge (HH-0);

     \foreach \source in {1,...,4}
            \foreach \dest in {1,...,3}
                \draw(I-\source) edge (HH-\dest);

    \foreach \source in {0,...,3}
       \draw (HH-\source) edge node[draw=none,fill=none,font=\scriptsize,midway,above] {1} (H-\source) ;

    \foreach \source in {0,...,3}
       \draw (HH-\source) edge node[draw=none,fill=none,font=\scriptsize,midway,below] {-1} (B-\source);
       
       \draw(B-0) edge (O-4);
       \draw(B-1) edge  (O-4);
       \draw(B-2) edge (O-4);
       \draw(H-2) edge (O-4);
       \draw(B-3) edge  (O-4);

       \draw(B-0) edge (O-5);
       \draw(B-1) edge (O-5);
       \draw(B-2) edge  (O-5);
       \draw(H-2) edge  (O-5);
       \draw(H-3) edge  (O-5);

       \draw(B-0) edge  (O-6);
       \draw(H-1) edge (O-6);
       \draw(B-2) edge (O-6);
       \draw(H-2) edge (O-6);
       \draw(B-3) edge  (O-6);
       \draw(H-3) edge  (O-6);

       \draw(H-0) edge (O-7);
       
       \draw(H-1) edge  (O-7);
       \draw(B-2) edge  (O-7);
       \draw(B-3) edge  (O-7);
       \draw(H-3) edge  (O-7);

       \draw(H-0) edge  (O-8);
       \draw(B-1) edge  (O-8);
       \draw(H-1) edge  (O-8);
       \draw(H-2) edge  (O-8);
       \draw(B-3) edge (O-8);
       \draw(H-3) edge  (O-8);

       \draw(B-1) edge node[draw=none,fill=white,midway,above] {1} (O-7);
       
       \draw(I-3) edge node[draw=none,fill=white,midway,above] {$A_i,b_i$} (HH-1);

    
    \node[annot,above of=H-0, node distance=0.5cm] (hl) {ReLU};

\end{tikzpicture}
\caption{The {\em DTSemNet} architecture corresponding to Decision Tree $T$ in Figure \ref{fig1}, with $X=\mathbb{R}^3$}
\label{fig2}
\end{figure}
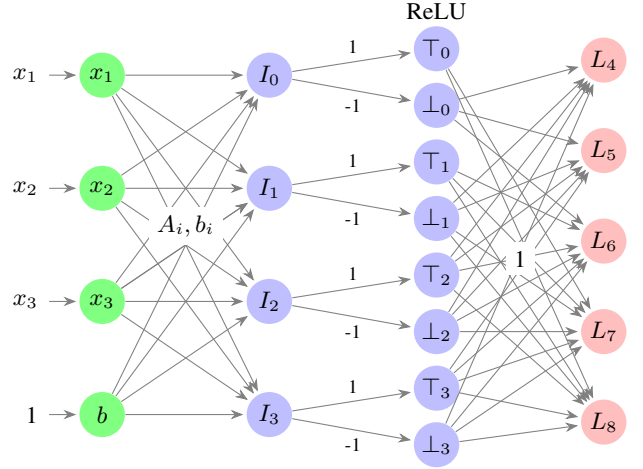

\begin{theorem}
\label{th1}
Considering the Neural Network {\em DTSemNet} as a Classifier, taking the decision associated with the output $L_i$ with the highest value using the standard argmax operation, gives exactly the same decision as the DT, i.e., select the (unique) leaf for which every associated decision is true.
Equivalently, For all input $x \in X$, we have $argmax_j(\weight_x(L_j))=L_i$ iff every decision associated with leaf $T_i$ are true on input $x$.
\end{theorem}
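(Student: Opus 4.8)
The plan is to compute the value $\weight_x(L_j)$ of each output node in closed form and show it equals a single global constant minus a nonnegative ``deficit'' that vanishes exactly for the leaf whose associated decisions are all true. First I would record the values propagated through the lower layers. Writing $d_i := A_i x + b_i$ for the linear first-hidden node $I_i$, the ReLU second layer gives $\weight_x(\top_i) = \max(d_i,0)$ and $\weight_x(\bot_i) = \max(-d_i,0)$, so that $\weight_x(\top_i) + \weight_x(\bot_i) = |d_i|$ and, crucially, $\weight_x(\top_i) = 0 \iff d_i \le 0$ while $\weight_x(\bot_i) = 0 \iff d_i \ge 0$. Since the footnote lets us assume $d_i \neq 0$, exactly one of $\top_i,\bot_i$ is zero and the other is strictly positive, according to whether decision $D_i$ is true or false on $x$.

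Next I would unfold the output weights into three cases per internal node $T_i$, following the architecture's rule: if leaf $T_j$ is a right descendant of $T_i$ (the leaf ``needs $D_i$''), then $L_j$ reads $\weight_x(\top_i)$ only; if $T_j$ is a left descendant (``needs $-D_i$''), it reads $\weight_x(\bot_i)$ only; and if $T_j$ is not a descendant of $T_i$, it reads both, contributing the full $\weight_x(\top_i)+\weight_x(\bot_i)=|d_i|$. Setting the global constant $S := \sum_i |d_i| = \sum_i \bigl(\weight_x(\top_i)+\weight_x(\bot_i)\bigr)$, I can then rewrite
\[
\weight_x(L_j) = S - \delta_j, \qquad \delta_j := \sum_{i:\, T_j \text{ needs } -D_i} \weight_x(\top_i) + \sum_{i:\, T_j \text{ needs } D_i} \weight_x(\bot_i),
\]
because every non-descendant node contributes its full $|d_i|$ (and hence cancels when comparing leaves), while each decision on the root-to-$T_j$ path drops exactly the ReLU term that is positive precisely when that decision is \emph{false}.

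The heart of the argument is then to read off when $\delta_j=0$. Each summand is nonnegative, so $\delta_j \ge 0$, with equality iff every summand vanishes. By the sign characterization above, $\weight_x(\top_i)=0$ means $d_i \le 0$, i.e. $-D_i$ is true --- exactly what a left descendant needs --- and $\weight_x(\bot_i)=0$ means $d_i \ge 0$, i.e. $D_i$ is true --- exactly what a right descendant needs. Hence $\delta_j=0$ iff all decisions associated with $T_j$ hold; otherwise, using $d_i\neq 0$, at least one summand is strictly positive and $\delta_j>0$. The excerpt already guarantees a unique leaf $T_i$ whose associated decisions are all true; for it $\weight_x(L_i)=S$, while every other leaf satisfies $\weight_x(L_j)=S-\delta_j < S$. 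Therefore $L_i$ is the strict, unique maximizer and $\arg\max_j \weight_x(L_j)=L_i$, which is precisely the DT's output, establishing the equivalence.

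I expect the only genuine obstacle to be the bookkeeping in the second step: transcribing the left/right/non-descendant weight rule correctly so that non-descendant nodes contribute the full $|d_i|$ and thereby cancel across leaves, leaving $S$ as a common baseline. Once the $\weight_x(L_j)=S-\delta_j$ normal form is in hand, the ReLU sign analysis and the uniqueness of the all-true leaf close the proof immediately.
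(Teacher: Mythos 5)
Your proof is correct and takes essentially the same approach as the paper's: both compare each leaf's output to the global sum $S=\sum_i\bigl(\weight_x(\top_i)+\weight_x(\bot_i)\bigr)$, show that the unique all-true leaf attains $S$ exactly (its disconnected ReLU terms are all zero), and show every other leaf falls strictly short because at least one strictly positive ReLU output is disconnected from it. Your normal form $\weight_x(L_j)=S-\delta_j$ is simply an exact-identity phrasing of the inequality bookkeeping the paper uses.
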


\begin{proof}
  Take some $x \in X$.
  Let us start with a few important remarks. First, exactly one leaf $T_i$ has all its associated decisions true (and is the leaf selected by the DT). For every other leaf $T_j, j \neq i$ there is at least one decision $D_j^x$ (some $D_\ell$ or some $-D_\ell$) associated with $T_j$, which is false.

Also, trivially, we have that exactly one of $\weight_x(I_i), -\weight_x(I_i)$ is strictly positive, and the other one is strictly negative.
Thus, because of the definition of ReLU, we have that exactly one of $\weight_x(\bot_i), \weight_x(\top_i)$ is 0, and the other is strictly positive. In particular, 
$0 \leq \weight_x(\bot_i))$, $\weight_x(\top_i) \leq  
\weight_x(\bot_i)) + \weight_x(\top_i) = \max(\weight_x(\bot_i), \weight_x(\top_i))$.

For all leaf $T_\ell$, we have 
$\weight_x(L_\ell) \leq \sum_j \weight_x(\bot_j)) + \weight_x(\top_j)$ by construction:
for all node $T_j$ not in the path between the root $T_0$ and $T_\ell$, 
these two weights are input into $L_k$ by construction (with weight 1), and for nodes $T_j$ that are on the path, one of them is input to $L_k$, and the other one is positive or null.

Let $\ell$ be such that every decision associated with leaf $T_\ell$ is true on input $x$. 
There is exactly one such leaf $T_\ell$.
We show that $\weight_x(L_\ell) = \sum_j \weight_x(\bot_j) + \weight_x(\top_j)$.
Indeed, as above, the only nodes $\bot_j$ (resp. $\top_j$) that are not direct input of 
$L_\ell$ corresponds to decisions that are not true, and thus 
$\weight_x(\bot_j)=0$ (resp. $\weight_x(\top_j)=0$).
Hence, the equality is true.

For every other leaf $L_{\ell'} \neq L_\ell$, we show that 
$\weight_x(L_{\ell'}) < \weight_x(L_\ell)$.
Indeed, let $D^x_{\ell'}$ be a decision associated with $T_{\ell'}$ that is not correct under input $x$, 
say it is decision $-D_i$, with $D_i$ true, that is $\weight_x(\top_i)>0$. As there is no weight from $\top_i$ on $L_{\ell'}$ by definition of {\em DTSemNet},
$\weight_x(L_{\ell'}) \leq (\sum_j \weight_x(\bot_j) + \weight_x(\top_j)) - 
\weight_x(\top_j) < \weight_x(L_\ell)$.
That is, $argmax_j(\weight_x(L_j))=L_i$ iff $i= \ell$, and we are done.
\end{proof}

While CART \cite{breiman1984cart} grows the DT incrementally, 
{\em DTSemNet} will learn a fixed DT architecture. The simplest way to fix the architecture is to pick a depth $n$ in regards to the difficulty of the task (number of features, etc., e.g., $n=5$ for few $\approx 5$ features) and consider the balanced binary tree with $2^n$ leaves ($2^5=32$). This is as simple as choosing the number of neurons of an NN. Notice that unlike DGT \cite{dgt2022}, {\em DTSemNet} can also directly handle any unbalanced trees, e.g., for optimizing the number of nodes.

\noindent
\paragraph{{\em DTSemNet}-classification:}

We now explain how the {\em DTSemNet} architecture selects the class for a classification task. Let $T_{class}$ a balanced DT with one leaf for each class. For instance, for {\em balance scale}, there are 3 classes {\em left}, {\em right} and {\em balanced}. The balanced DT is provided below (see Figure \ref{fig3}). The last (here 2) levels of the DT to learn are then replaced by $T_{class}$, letting the gradient descent learn the most efficient way to play actions from each region of the space. We transform this tree using the {\em DTSemNet} architecture: there will thus be several nodes $L_j$ corresponding to leaves $T_j$ associated with the same class.
To recover the standard classification framework in {\em DTSemNet}, we add a final layer with $\ell$ nodes $C_1, \ldots, C_\ell$, one for each of the $\ell$ classes $\alpha_1, \ldots, \alpha_\ell$. The input weight to $C_i$ is 1 from $L_j$ if leaf $T_j$ is associated with action $\alpha_i$, and 0 otherwise. The \texttt{MaxPool} operation is performed at the node $C_i$, which then outputs the maximum value of associated inputs into node $C_i$.
Applying Theorem \ref{th1}, if leaf $T_j$ is selected by the DT on input $x \in X$, then the value $\weight_x(L_j)$ is maximal, and for $\alpha_i$ the class associated with $T_j$, output node $C_i$ will get the highest value among all output nodes $C_j$. Class $\alpha_i$ will be thus selected by {\em DTSemNet}, semantically equivalent to the DT $T$.

\begin{figure}[]
  \centering

  \begin{tikzpicture}
    [ node/.style={ draw,  rectangle }, ]

    \node [node,fill=blue!25] (A) {$T_0$};
    \path (A) ++(-150:16mm) node [node,fill=blue!25] (B) {$T_1$};
    \path (A) ++(-30:20mm) node [node,fill=red!25] (C) {balanced};
    \path (B) ++(-120:16mm) node [node,fill=red!25] (FF) {left};
    \path (B) ++(-60:16mm) node [node,fill=red!25] (G) {right};
    
    \draw (A) -- (B) node [left,pos=0.35] {} (A);
    \draw (A) -- (C) node [right,pos=0.5] {} (A);
    \draw (B) -- (FF) node [left,pos=0.35] {}(A);
    \draw (B) -- (G) node [right,pos=0.35] {}(A);

\end{tikzpicture}
\caption{Decision Tree $T_{class}$ for Balance scale.}
\label{fig3}
\end{figure}
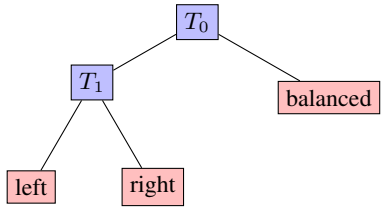


\noindent
\paragraph{{\em DTSemNet}-regression:}

Usual DT for regression tasks learned by CART \cite{breiman1984cart} (axis-aligned DTs) and by DGT \cite{dgt2022} (oblique DTs) are as follows: each leaf $T_j$ is associated with one scalar value $\alpha_j$, and the DT $T$ outputs $\alpha_\ell$ for $T_\ell$ the leaf selected by the decisions of $T$ on input $x$. In CART, this parameter $\alpha_\ell$ is the average of the training values corresponding to inputs ending in $T_\ell$, resulting in a DT $T$ that does not generalize, even when learning a simple linear function.

We now describe regression-DTs, which are more expressive, as used in 
TAO(-linear) \cite{tao-regression} and ICCT \cite{icct2022}.
Every leaf $T_j$ of the (regression-)DT $T$ is associated with linear   parameter vector $\theta_j \in \mathbb{R}^{n}$ and an affine scalar $\alpha_j \in \mathbb{R}$. The output of regression-DT $T$ on input $x\in X$ is $\theta_\ell \cdot x + \alpha_\ell$, where $T_\ell$ is the leaf selected on input $x$ by the decisions of $T$. A regression-DT would easily generalize two or more data points from a training set into the linear function. Regression-DTs actually correspond to a piecewise linear regression, where the different linear pieces are described by the parameters at the leaves, and the switch between pieces by the decisions of the DT.

\begin{figure}[]
  \centering
  \includegraphics[width=0.9\linewidth]{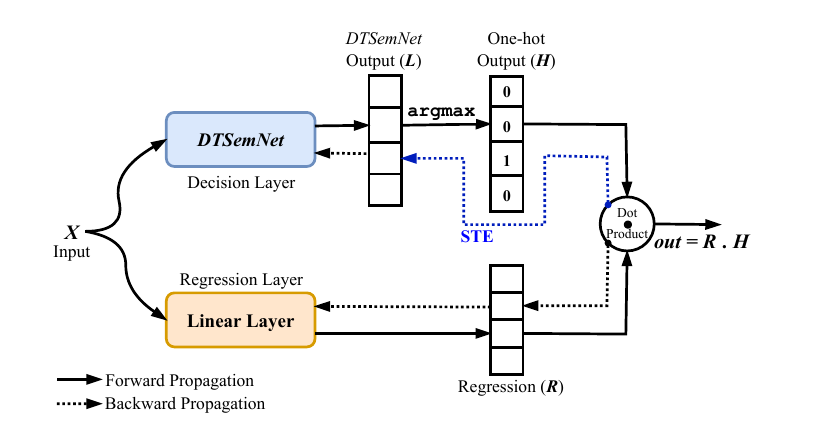}
  \caption{{\em DTSemNet}-regression adapted for regression tasks.}
  \label{fig:regression}
\end{figure}

We now adapt {\em DTSemNet} to learn such a regression-DT.
We learn simultaneously: (a) the parameters of the linear regression at every leaf and (b) the decisions leading to the most appropriate leaf and associated linear regression. The first hidden (linear) layer is extended with $m-k$ nodes $(R_j)_{k < j \leq m}$, one per leaf $T_j$, with trainable weights $\mathbf{\theta}_j$ from input $x$ to $R_j$ and $\alpha_j$ from input $1$ to $R_j$: We have $\weight_x(R_j) = \mathbf{\theta}_j \cdot x + \alpha_j$. In order to select $\weight_x(R_\ell)$ for $L_\ell$ with the highest 
value in {\em DTSemNet} (that is $T_\ell$ selected by the DT by Theorem \ref{th1}),
we add a one-hot layer with $m-k$ nodes $(H_j)_{k < j \leq m}$ using the \texttt{argmax} function (see Figure \ref{fig:regression}): node $H_\ell$ has value 1 for $L_\ell$ with maximal value, and $H_j$ has value 0 for $j \neq \ell$; finally an output node $out$ computes the scalar product between $(value_x(H_j))_{k < j \leq m}$ and $(value_x(R_j))_{k < j \leq m}$, obtaining $\weight_x(out) = \weight_x(R_\ell) =\mathbf{\theta}_j \cdot x + \alpha_j$.
In end-to-end training, the \texttt{argmax} operation makes the entire architecture non-differentiable, so STE 
\cite{bengio2013estimating,hubara2016binarized} is used to bypass the gradient through \texttt{argmax} operation during backpropagation. The STE approximation is only used for {\em DTSemNet}-regression (not -classification), and only at the output node, in contrast to STE approximation, which is used at every single node in DGT and ICCT \cite{dgt2022, icct2022}, and even for classification.

\paragraph{{\em DTSemNet} for RL:}

To tackle RL tasks, we use the standard Deep RL framework (e.g., PPO \cite{schulman2017proximal}), simply replacing the NN with the {\em DTSemNet} architecture and running the gradient descent from the RL framework. For environments with discrete action spaces, we use {\em DTSemNet}-classification, with one class per action. For environments with continuous action spaces, we use {\em DTSemNet}-regression, with one linear regression per action dimension (e.g., 2 for a continuous lunar lander, one for horizontal and one for vertical engines)

\begin{table*}[t!]
    \centering
    \caption{Percentage accuracy (higher is better) on {\bf classification tasks} with the most accurate height $\leq 4$, as tested in \cite{ea2023}. For each dataset, we provide the number of features $N_f$, classes $N_c$ and training samples $N_s$. Averaged accuracy $\pm$ std is reported over 100 runs. The best-performing height is reported in parentheses for each architecture. We run the \textit{DTSemNet} and DGT experiments, while the results for TAO, CART and CRO-DT are copied from \cite{ea2023}.}
    \begin{tabular}{llccccc}
    \toprule
    Dataset & $N_f$, $N_c$, $N_s$ & {\em DTSemNet} & DGT \cite{dgt2022} & TAO \cite{carreira2018alternating} & CART \cite{breiman1984cart} & CRO-DT \cite{ea2023} \\
    \midrule 
    Balance Scale & 4, 3,625 & $\mathbf{90.2 \pm 2.2}$ (2) & $88.6 \pm 1.7$ (4) & $77.4 \pm 3.1$ (4) & $74.9 \pm 3.6$ (4) & $77.8 \pm 3.0$ (4) \\
    Banknote Auth & 4, 4, 1372 & $\mathbf{99.8 \pm 0.4}$ (3) & $ \mathbf{99.8 \pm 0.4}$ (3) & $96.6 \pm 1.3$ (4) & $93.6 \pm 2.2$ (4) & $95.2 \pm 2.2$ (4)\\
    Blood Transfusion & 4, 2,784 & $\mathbf{78.5 \pm 1.7}$ (2) & $78.3 \pm 2.4$ (4)& $76.9 \pm 2.1$ (3) & $77.1 \pm 1.8$ (4) & $76.1 \pm 1.9$ (3) \\
    Acute Inflam. 1 & 6, 2, 120 & $\mathbf{100 \pm 0.0}$ (2) & $\mathbf{100 \pm 0.0}$ (4) & $99.7 \pm 1.2$ (4) & $\mathbf{100 \pm 0.0}$ (3) & $\mathbf{100 \pm 0.0}$ (2)\\
    Acute Inflam. 2 & 6, 2, 120 & $\mathbf{100 \pm 0.0}$ (2) & $\mathbf{100 \pm 0.0}$ (4) & $99.0 \pm 2.6$ (2) & $99.0 \pm 2.6$ (2) & $\mathbf{100 \pm 0.0}$  (2)\\
    Car Evaluation & 6, 4, 1728 & $\mathbf{93.3 \pm 2.2}$ (4) & $92.1 \pm 2.4$ (4) & $84.5 \pm 1.5$ (4) & $84.3 \pm 1.4$ (4) & $86.1 \pm 1.3$ (4) \\
    
    Breast Cancer & 9, 2, 683 & $\mathbf{97.2 \pm 1.3}$ (2) & $\mathbf{97.2 \pm 1.2}$ (2) & $94.7\pm 1.6$ (3) & $94.7 \pm 1.7$ (3) & $95.5 \pm 1.8$ (2) \\
    Avila Bible & 10, 12, 10430 & $\mathbf{62.2 \pm 1.4}$ (4) & $59.7 \pm 1.8$ (4) & $55.8 \pm 0.8 $ (4) & $54.0 \pm 1.3$ (4) & $59.6 \pm 0.7$ (4) \\
    Wine Quality Red & 11, 6, 1599 & $ \mathbf{58.6 \pm 2.2}$ (3) & $56.6 \pm 1.4$ (4)& $56.9 \pm 2.5$ (4) & $55.9 \pm 2.3$ (4) & $55.8 \pm 2.2$ (2) \\
    Wine Quality White & 11, 7, 4898 & $\mathbf{53.5 \pm 1.4}$ (4) & $52.1 \pm 1.6$ (4) & $52.3 \pm 1.4$ (4) & $52.0 \pm 1.3$ (4) & $51.4 \pm 1.2 $ (2) \\ 
    Dry Bean & 16, 7, 13611 & $\mathbf{91.4 \pm 0.5}$ (4) & $89.0 \pm 1.6$ (4) & $83.2 \pm 1.5$ (4) & $80.5 \pm 1.9$ (4) & $77.9 \pm 4.7$ (4) \\
    Climate Crashes & 18, 2, 540 & $\mathbf{92.9 \pm 1.4}$ (2) & $92.4 \pm 2.4$ (3) & $90.6 \pm 2.2$ (3) & $91.8 \pm 1.8$ (4) & $91.5 \pm 2.0$ (2) \\
    Conn. Sonar & 60, 2, 208 & $\mathbf{82.1 \pm 5.1}$ (4) & $80.8 \pm 5.3$ (4) & $70.9 \pm 5.8$ (4) & $70.6 \pm 6.6$ (4) & $71.7 \pm 6.7$ (4) \\
    Optical Recognition & 64, 10, 3823 & $\mathbf{93.3 \pm 1.0}$ (4) & $91.9 \pm 1.0$ (4) & $64.6 \pm 6.5$ (4) & $53.2 \pm 3.2$ (4) & $65.2 \pm 2.0$ (4) \\
    \bottomrule
    \end{tabular}
    \label{tab:classification2}
\end{table*}

\paragraph{Comparison with other encodings of DTs into NNs:}

DGT \cite{dgt2022} uses the \texttt{Sign} activation function at each layer to produce (hard) oblique DTs. As \texttt{Sign} has no gradient (unlike \texttt{ReLU} used in {\em DTSemNet}), DGT resorts to {\em quantized} gradient descent, using an STE approximation process at every node. In regression tasks, DGT produces a scalar for each leaf instead of a linear regression for {\em DTSemNet}-regression.
The ICCT architecture\cite{icct2022} generates 
axis-aligned DTs for RL tasks by associating every leaf with the product of the weights of the edges leading to that leaf, passing in logarithmic space to handle the explicit multiplications.
The \texttt{Sigmoid} activation is used on each decision, incurring an approximation of the decision as a soft DT. The soft DT is crispified at each step of the RL process into a hard (axis-aligned) DT, using an STE to backpropagate through a non-differentiable function (Heaviside step function), which {\em DTSemNet}-classification avoids.

\section{Experimental Evaluation}\label{sec:results}
In this section, we evaluate the performance of {\em DTSemNet}, comparing it with competing methodologies learning hard DTs. 
Firstly, we consider supervised learning setups using multiple benchmark multi-class classification and regression datasets, on which we compare the accuracy on test data with the SOTA non-greedy method TAO \cite{carreira2018alternating, tao-regression}, the SOTA gradient descent-based method DGT \cite{dgt2022}, both learning oblique DTs, as well as CRO-DT \cite{ea2023} for global-searches and CART as the standard for greedy algorithms, both learning axis-aligned DTs. The relative training times are reported for benchmarks for which it is available (notice that we cannot run TAO on our hardware as it is not openly available). Further, to understand the impact on the generalization of different architectures using gradient descent, we leverage insights from the loss landscape \cite{li2018visualizing}. 

Lastly, we consider RL environments, both with discrete actions and continuous action spaces. We compare {\em DTSemNet} with DGT, both of which generate oblique DT-policies, ICCT, which generates axis-aligned DT-policies, all three through gradient descent, as well as VIPER, which generates axis-aligned DT-policies through imitation learning of an NN policy generated by Deep RL, which we also report as a baseline. 

We implemented {\em DTSemNet} and conducted all experiments using Python and PyTorch. Our testing platform has 8 CPU cores (AMD 75F3, Zen 3 architecture), 128 GB of RAM, and a 2 GB GPU (NVIDIA Quadro P620). The supplementary material \cite{dtsemnet-supp} provides additional results and details regarding datasets, train-test splits, hyperparameters, etc. 
Our source code is publicly available at \textit{\url{https://github.com/CPS-research-group/dtsemnet}}.

\medskip
\noindent
\textbf{Classification Tasks (small DTs):} We first consider the 14 classification tabular datasets used in CRO-DT \cite{ea2023}.
Global searches such as CRO-DT \cite{ea2023} are efficient only for small DTs (here up to depth 4, i.e. 32 nodes). We sort the benchmarks by the number of features, as it is more challenging to decide over more features, all the more so with small DTs. We report in Table \ref{tab:classification2} the (average) score over 100 DTs learned with different seeds for the most accurate height (up to $4$). 

In every single benchmark, {\em DTSemNet} produces the most accurate DTs, with the biggest difference in {\em Dry Beans}, reducing the classification error from $11\%$ to $8.6\%$. DGT is the second best, except in the 2 wine quality benchmarks, where TAO outperforms it. DGT usually comes close to {\em DTSemNet}, which is not surprising as the idea is similar, though the approximations used (STEs, quantized gradient descent) make DGT perform on average $0.85\%$ worse than {\em DTSemNet}. Interestingly, the advantages grow to $1.3\%$ on the 4 benchmarks with the most features (the hardest ones). Further, DGT needs larger trees than {\em DTSemNet} to get its best results (in $6$ out of $14$ benchmarks). Compared with non gradient-based learning, the advantage of {\em DTSemNet} is very tangible, $5.5\%$ on average, growing to $12\%$ on the 4 benchmarks with the most features. The highest difference is in {\em Optical Recognition}, reducing the classification error from $34.8\%$ to just $6.7\%$.

\begin{table}[]
\caption{Training times in seconds (lower is better), and (avg accuracy $\%$ (higher is better)). The MNIST training time from the non-publicly available TAO is quoted from \cite{carreira2018alternating}, while we train other architecture on a similar compute configuration. The tree height is 8 for MNIST and 4 for DryBean.}
\centering
    \begin{tabular}{lllll}
    \toprule
    Dataset & {\em DTSemNet} & DGT \cite{dgt2022} & TAO \cite{arora2018optimization} & CRO-DT \cite{ea2023} \\
    \midrule
    MNIST & $306$ ($96.1$) & $288$ ($94.0$) & $1200$ ($95.0$) & $4659$ ($58.2$) \\
    DryBean & $4.4$ ($91.4$) & $3.8$ ($89.0$) & NA ($83.2$) & $1300$ ($77.9$) \\
    \bottomrule
    \end{tabular}
\label{tab:time}
\end{table}

\begin{table*}[t!]
\caption{Percentage accuracy (higher is better) on {\bf classification tasks} for the datasets reported in \cite{carreira2018alternating}. 
For each dataset, we provide the number of features $N_f$, classes $N_c$, and training samples $N_s$. All methods use the tree height fixed in \cite{dgt2022} (except CART, which has no predefined height). 
Averaged accuracy $\pm$ std is reported over 10 runs. The results for DGT and CART are from \cite{dgt2022}, and the results of TAO are from \cite{carreira2018alternating}.}
\centering
\begin{tabular}{llccccc}
\toprule
Dataset & $N_f$, $N_c$, $N_s$ & Height & {\em DTSemNet} & DGT \cite{dgt2022} & TAO \cite{carreira2018alternating} & CART \cite{breiman1984cart} \\
\midrule
Protein & 357, 3, 14895 & 4 & $\mathbf{68.60 \pm 0.22}$ & $67.80 \pm 0.40$ & $68.41 \pm 0.27$ & $57.53 \pm 0.00$ \\
SatImages & 36, 6, 3104 & 6 & $\mathbf{87.55 \pm 0.59}$ & $86.64 \pm 0.95$ & $87.41 \pm 0.33$ & $84.18 \pm 0.30$ \\
Segment & 19, 7, 1478 & 8 & $\mathbf{96.10 \pm 0.53}$ & $95.86 \pm 1.16$ & $95.01 \pm 0.86$ & $94.23 \pm 0.86$ \\
Pendigits & 16, 10, 5995 & 8 & $\mathbf{97.02 \pm 0.32}$ & $96.36 \pm 0.25$ & $96.08 \pm 0.34$ & $89.94 \pm 0.34$ \\
Connect4 & 126, 3, 43236 & 8 & $\mathbf{82.03 \pm 0.39}$ & $79.52 \pm 0.24$ & $81.21 \pm 0.25$ & $74.03 \pm 0.60$ \\
MNIST & 780, 10, 48000 & 8 & $\mathbf{96.16 \pm 0.14}$ & $94.00 \pm 0.36$ & $95.05 \pm 0.16 $ & $85.59 \pm 0.06$ \\
SensIT & 100, 3, 63058 & 10 & $\mathbf{84.29 \pm 0.11}$ & $83.67 \pm 0.23$ & $82.52 \pm 0.15$ & 
$78.31 \pm 0.00$ \\
Letter & 16, 26, 10500 & 10 & $\mathbf{89.19 \pm 0.29}$ & $86.13 \pm 0.72$ & $87.41 \pm 0.41$ & $70.13 \pm 0.08$ \\

\bottomrule
\end{tabular}
\label{tab:classification1}
\end{table*}

\begin{table*}[h!]
\caption{Average RMSE results (lower is better) on {\bf regression tasks}, 
$\pm$ std over 10 runs. The number $N_f$ of features and $N_s$ of training samples are provided for each dataset. DGT-linear is a modification we implemented from DGT \cite{dgt2022} to add regressors at the leaves.
The 5 first datasets are from \cite{tao-regression}.
The height for DTs with regressors at the leaves ({\em DTSemNet}, DGT-Linear, TAO-linear) is fixed to Height, following \cite{tao-regression}. DGT has fixed scalars at the leaves instead of regressors, thus it needs deeper DTs to achieve reasonable accuracy - the depth used in \cite{dgt2022}
is reported inside (), while CART height is not fixed. The last two datasets are from \cite{dgt2022}, for which  TAO-linear results are unavailable. The reported results for DGT and CART are taken from \cite{dgt2022}; TAO-Linear results from \cite{tao-regression}.}
    \centering
    \begin{tabular}{llcccccc}
    \toprule
    Dataset & $N_f$, $N_s$ & Height & {\em DTSemNet} & DGT-Linear & DGT \cite{dgt2022} & TAO-Linear \cite{tao-regression} & CART \cite{breiman1984cart} \\ 
    \midrule
    Abalone &10, 2004& 5 & $2.135 \pm 0.03$ & $2.144 \pm 0.03$ & $2.15 \pm 0.026$ (6) & $\mathbf{2.07 \pm 0.01}$ & $2.29 \pm 0.034$ \\
    Comp-Active &21, 3932& 5 & $2.645 \pm 0.18$ & $2.645 \pm 0.15$ & $2.91 \pm 0.149$ (6) & $\mathbf{2.58 \pm 0.02}$ & $3.35 \pm 0.221$ \\
    Ailerons &40, 5723& 5 & $\mathbf{1.66 \pm 0.01}$ & $1.67 \pm 0.017$ & $1.72 \pm 0.016$ (6) & $1.74 \pm 0.01$ & $2.01 \pm 0.00$ \\
    CTSlice &384, 34240& 5 & $1.45 \pm 0.12$ & $1.78 \pm 0.25$ & $2.30 \pm 0.166$ (10) & $\mathbf{1.16 \pm 0.02}$ & $5.78 \pm 0.224$ \\
    YearPred &90, 370972& 6 & $\mathbf{8.99 \pm 0.01}$ & $9.02 \pm 0.025$ & 
    $9.05 \pm 0.012$ (8) & $9.08 \pm 0.03$ & $9.69 \pm 0.00$ \\
\midrule
    PDBBind &2052, 9013 & 2 & $\mathbf{1.33 \pm 0.017}$ & $1.34 \pm 0.013$ & $1.39 \pm 0.017$ (6) & NA & $1.55 \pm 0.00$ \\
    Microsoft &136, 578729& 5 & $\mathbf{0.766 \pm 0.00}$ & $\mathbf{0.766 \pm 0.00}$ & $0.772 \pm 0.00$ (8) & NA & $0.771 \pm 0.00$ \\
       \bottomrule
    \end{tabular}
    \label{tab:regression}
\end{table*}

\medskip
\noindent
\textbf{Classification Tasks (deeper DTs):} We now consider the original 8 classification tasks reported in TAO \cite{carreira2018alternating} 
and reported in \cite{dgt2022}. They use tabular datasets, except for MNIST, which is a small-sized image dataset. We used the fixed tree height specified in 
\cite{carreira2018alternating} (also reused in \cite{dgt2022}).
We first report in Table \ref{tab:time} the training time for the architecture on MNIST, as \cite{carreira2018alternating} reports this number for TAO. We train other architectures on comparable computing configurations. We provide the accuracy number obtained for reference. We also report the training time for the simpler DryBean, for comparison sake, except for TAO, for which this is not available. 
First, CRO-DT for MNIST takes much longer to train (number of generations set to the default 4k) than other architectures while having very low accuracy numbers ($<60 \%$ instead of $>90\%$), the reason why we do not consider it for these benchmarks with deeper DTs (indeed, \cite{ea2023} does not report CRO-DT results for these benchmarks). The training times of DGT and {\em DTSemNet} are almost identical due to their architectural similarities. As expected, training with gradient-based methods is observed to be significantly faster than non-gradient learning methods.

We sort the benchmarks by the height of the DTs, which is a good indicator of the complexity of the tasks. We report in Table \ref{tab:classification1} the (average) score over 10 DTs learned with different seeds.
{\bf Discussion:} As in Table \ref{tab:classification2}, {\em DTSemNet} produces the most accurate DTs
in every single benchmark. With deeper trees, TAO recovers, and it is the second best 5 times vs 3 times for DGT. 
The advantage of {\em DTSemNet} over TAO is $1\%$ on average, growing to 
$1.4\%$ on the 4 benchmarks with deeper trees. 
The highest difference is in {\em Letter}, reducing the classification error from $12.6\%$ to $10.8\%$ and from $3.9\%$ to $3\%$ in {\em Pendigits}. The advantage of {\em DTSemNet} over DGT is $1.4\%$ on average, growing to $2.1\%$
on the 4 benchmarks with deeper trees, larger than for smaller trees from Table \ref{tab:classification2}. The largest difference is in {\em Letter}, reducing the classification error from $13.9\%$ to $10.8\%$, and 
from $6\%$ to $3.9\%$ in MNIST. The Friedman-Nemenyi test across all classification benchmarks, with a significance level of 0.05, shows that {\em DTSemNet} is significantly better than every other method with the average rank of 1.11 for {\em DTSemNet}, 2.25 for DGT, 2.86 for TAO, 3.60 for CRO-DT, and 3.77 for CART.

\begin{figure}[b]
  \centering
      \includegraphics[scale=0.29]{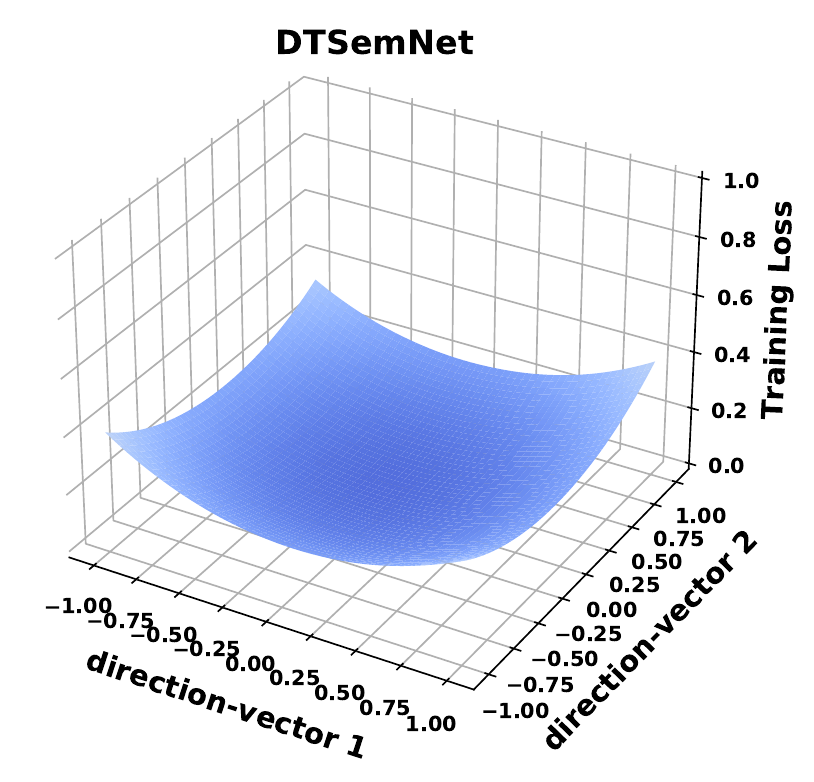}
      \includegraphics[scale=0.28]{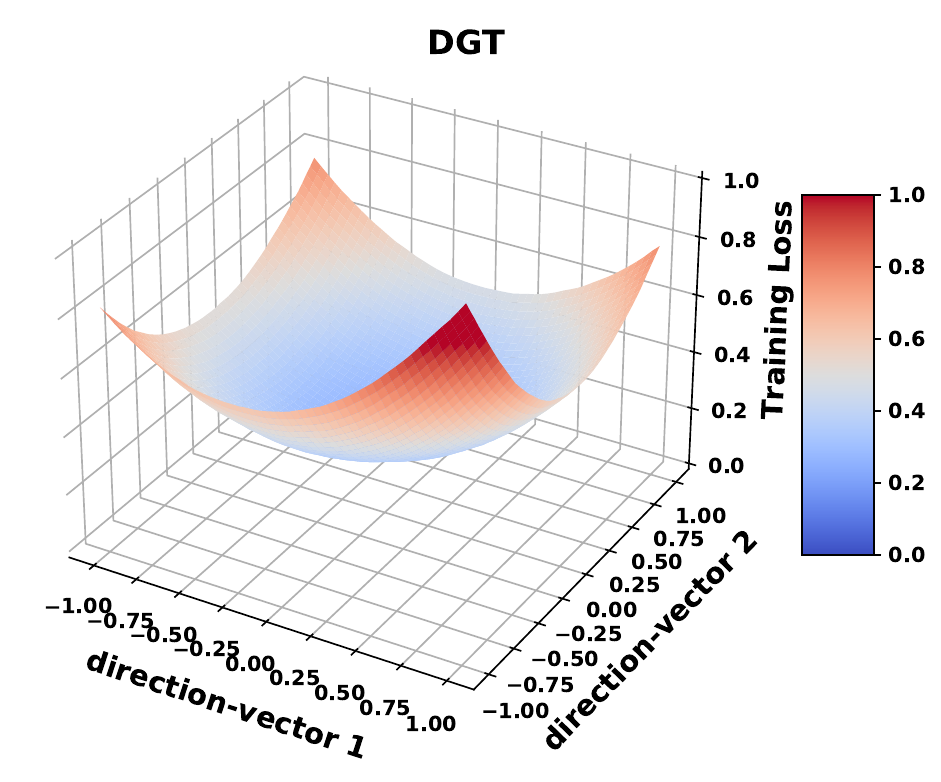}
      \caption{The loss landscape of {\em DTSemNet} and DGT for MNIST, when varying the parameters around the trained parameters along two random directions. The flatter the loss landscape, the better the generalization.}
      \label{fig:loss_ls}
\end{figure}

To understand the influence of approximations used in DGT (quantized gradient descent and STEs), we consider the loss landscape \cite{li2018visualizing}, which displays the effect of architecture choices on generalization. We consider MNIST, for which DGT makes many more classification errors than {\em DTSemNet}. Figure \ref{fig:loss_ls} displays the loss landscape around the final trained parameters along two random vector directions. A flatter loss landscape indicates better generalization capability \cite{li2018visualizing}. As shown in Figure \ref{fig:loss_ls}, the loss landscape of {\em DTSemNet} is very flat compared with DGT. This suggests that (approximation-free) {\em DTSemNet}  allows better generalization on classification tasks than DGT.

\begin{table*}[ht!]
\caption{Average rewards (higher is better) on {\bf RL tasks} $\pm$ std over 5 policies generated with different learning seeds. The policies are evaluated over 100 episodes. We report the number $N_f$ of features and 
$N_a$ of actions for each environment. The first four environments 
have discrete actions, and the bottom two continuous actions, for which we test both the original ({\em scalar}) version of DGT and DGT-linear ({\em linear}). For the 5 first environments, the {\em Height} is fixed for all architectures. For Bipedal Walker, DGT(-linear) performed much better with deeper DTs, while ICCT performed much better with shallower DTs ({\em DTSemNet} was not much impacted by the height). For Bipedal Walker, the heights yielding the best rewards for DGT(-linear) and ICCT are shown in parentheses.}
  \centering
  \begin{tabular}{llcccccc}
   \toprule
   Environments & $N_f$, $N_a$ & {\em Height} & {\em DTSemNet} & Deep RL & DGT \cite{dgt2022} & ICCT \cite{icct2022} & VIPER \cite{bastani2018verifiable}\\ 
   \midrule
    CartPole & 4, 2 & 4 & $\mathbf{500 \pm 0}$ & $\mathbf{500 \pm 0}$ & $\mathbf{500 \pm 0}$ & $496 \pm 0.3$ & $499.95 \pm 0.05$ \\
    Acrobot & 6, 3 & 4 & $\mathbf{-82.5 \pm 1.05}$ & $-84 \pm 0.84$ & $-83.1 \pm 1.88$  & $-88.6 \pm 1.77$ & $-83.92 \pm 1.59$ \\
    LunarLander & 8, 4 & 5 & $\mathbf{252.5 \pm 3.9}$ & $245 \pm 14.5$ & $183.6 \pm 14.6$ & $-85 \pm 16.3$ & $86.73 \pm 7.93$ \\
    Zerglings & 32, 30 & 6 & $\mathbf{15.54 \pm 2.07}$ & $10.47 \pm 0.23$ & $8.21 \pm 1.03$ & $9.40 \pm 1.10$ & $10.61 \pm 0.46$ \\
    \toprule
    Cont. LunarLander & 8, 2 dim. & 4 & $\mathbf{277.24 \pm 2.09}$ & $276.12 \pm 1.45$ & \begin{tabular}[c]{@{}l@{}} {\em scalar}: $131.92 \pm 51.49$ \\ {\em linear}: $267.9 \pm 9.37$ \end{tabular} & $255.57 \pm 4.19$ & NA \\
    Bipedal Walker & 24, 4 dim. & 7 & $314.98 \pm 3.35$ & $\mathbf{315.3 \pm 6.91}$ & \begin{tabular}[c]{@{}l@{}} {\em scalar}: $78.33 \pm 57.19$ (8)\\ {\em linear}: $244.5 \pm 61.84$ (8) \end{tabular} & $301.34 \pm 3.09$ (6) & NA \\
\bottomrule
\end{tabular}
\label{tab:rl}
\end{table*}

\paragraph{Regression Tasks:} We consider the 5 regression datasets from \cite{tao-regression}, plus the two extra from  \cite{dgt2022} (for which TAO-linear results are not available). While TAO-linear \cite{tao-regression} generates DT with regressors at the leaves, similarly as {\em DTSemNet}, allowing them to generalize, DGT \cite{dgt2022} only learns scalar at the leaves, similarly as CART, and thus is less efficient (and cannot generalize), needing deeper DTs to get acceptable accuracy. To understand the impact of regressors at the leaves, we implemented {\em DGT-linear}, a modification of DGT using regressors at the leaves, in the same way as in {\em DTSemNet} and in ICCT \cite{icct2022}. Architectures with regressors at the leaves, namely {\em DTSemNet}(-regression), DGT-linear, and TAO-linear, use the same fixed height following \cite{tao-regression}, while DGT (without regressors) uses deeper trees (as reported in \cite{dgt2022}) and CART is not restricted.
The results of the regression datasets are presented in Table \ref{tab:regression}. 

{\em DTSemNet}-regression is sometimes second behind TAO-linear (3 benchmarks), and best for the 4 other benchmarks.
{\em DTSemNet} is consistently better than (original) DGT,
by $10\%$ of RMSE on average over the 7 benchmarks, with $50\%$ better
RMSE scores for CTslice, even though DGT uses larger (hard oblique) DTs than {\em DTSemNet}. Compared with  DGT-linear with regressors at the leaves (our adaptation), {\em DTSemNet} is still consistently better or tied, but the advantage is reduced to $3.7\%$ of RMSE on average over the 7 benchmarks, with up to $23\%$ better scores (CTSlice). Two third of the advantage of {\em DTSemNet} over DGT can be attributed to the regressors at the leaves, but still, a meaningful one-third of the advantage can be attributed to reducing the approximations, with only 1 STE call for {\em DTSemNet}-regression instead of $n$ STE calls for DGT and DGT-linear, where $n$ is the height of the DT. Overall, CTSlice is sensitive to approximations.

\medskip
\noindent
\textbf{RL Tasks:} We experimented with four cyber-physical environments with {\em discrete actions} with a limited number of physical features ($\leq 32$):
three environments from OpenAI Gym, namely CartPole (4 features, 2 actions), Acrobot (6 features, 3 actions) and LunarLander (8 features, 4 actions), as well as a larger SC-II environment, namely FindandDestroyZerglings (32 features, 30 actions) \cite{starcraft}. We also experimented with two 
OpenAI environments with continuous actions, namely the continuous version of Lunar Lander (8 features, 2 action dimensions -- horizontal and vertical thruster) and Bipedal Walker (24 features, 4 actions).

We fix the height of the DT that is learned by the different architectures
and compare for reference with Deep RL with an NN with the same number of learned parameters,
which is also used as an expert from which to perform imitation learning for VIPER \cite{bastani2018verifiable}. ICCT and VIPER generate (hard) axis-aligned DT policies, while {\em DTSemNet} and DGT generate (hard) oblique DT policies.
Further, for continuous action space, the regression version of each architecture is used, with 2 variants for DGT: the original with scalars at the leaves and our modification with linear regressions (similar to {\em DTSemNet} and ICCT). Notice that VIPER does not handle continuous action space.

We train 5 policies for each case, using different seeds for the environments. 
We use the standard PPO \cite{schulman2017proximal} for discrete actions and SAC \cite{sac} for continuous actions from the library StableBaseLine3. We run each policy on the same 100 seeds to evaluate the policies and compute the mean reward. Table \ref{tab:rl} reports the mean reward values averaged over the 5 different policies for each architecture.

Overall, {\em DTSemNet} is competitive with Deep RL generating NN-policies on these environments with limited state dimensions ($\leq 32$). Further, {\em DTSemNet} consistently generates the most efficient DT-policies. The more complex the environment, the larger the lead: 
\begin{itemize} 
\item On the simplest CartPole ($N_f=4$), it is tied for first.
\item for Acrobot ($N_f=6$), it leads by a couple of percent.
\item for (discrete action) LunarLander ($N_f=8$), it has a very sizable lead compared to other DT architectures ($\geq 28\%$). Notice that VIPER 
can perform closer ($>200$ reward) to {\em DTSemNet}, but it needs 
very large DTs ($>1300$ leaves) vs 32 leaves for {\em DTSemNet}. 
\item for Zerglings ($N_f=32$), the lead is even more sizable ($\geq 39\%$).
\end{itemize}

Continuous action spaces are easier to handle than the discrete case: 
For Lunar Lander, each architecture with regressors at the leaves has a higher mean reward in the continuous case than in the discrete version, which is expected as it has access to precise continuous actions over 2 dimensions rather than a very coarse choice between 4 discrete actions. Further, rewards are much more similar between the architectures, confirming that the architecture is not as important as in the discrete version since the policies can use a linear actuator with respect to the input state rather than a fixed action as in the discrete version. Even Axis-Aligned DTs come close to {\em DTSemNet}, performing extremely poorly in the discrete version. Again, our linear adaptation for DGT is much more accurate than the original version from \cite{dgt2022}, although the gap with {\em DTSemNet} is still significant for Bipedal Walker.

Regarding the impact of different approximations used by DGT and ICCT, they both use STEs. The \texttt{Sigmoid} (and multiplications) used in ICCT is inefficient for Lunar Lander, particularly for the discrete action variant. 
Quantized gradient descent (DGT due to non-differentiable \texttt{Sign} function) is particularly inefficient in the Zerglings environment and for Bipedal Walker, with poorer results than even the less expressive axis-aligned DTs generated by ICCT. Further, for Bipedal Walker, we observed significant variability in the performance of DGT and ICCT when the tree depth varied, unlike {\em DTSemNet}, which remained consistently $>300$ for depth 6,7,8: the performance of DGT-linear at a depth of 6,7,8 are $112,163,244$, it needs deeper DTs. For ICCT, the performance at depth 6,7,8 is $301,177,112$, severely decreasing with depth, which should not be happening, another proof that approximations hurt the accuracy.

\section{Conclusion}\label{sec:conslusion}
We introduced {\em DTSemNet}, an architecture semantically equivalent to oblique DTs. This architecture enables oblique DTs to be learned using vanilla gradient descent. We demonstrate its performance on supervised classification and regression datasets and RL tasks.
{\em DTSemNet} consistently generates DTs more accurately (or as accurately in easier benchmarks) than competing architecture learning DTs using gradient descent.
This is because {\em DTSemNet}-classification uses no approximation, unlike its competitors, and because {\em DTSemNet}-regression uses fewer approximations.
Further, compared to non-gradient-based methodologies (greedy, non-greedy and global search) for learning DTs, {\em DTSemNet} is significantly faster. {\em DTSemNet}-classification outperforms the best of these methods, reducing the errors by $>10\%$ on harder classification tasks, while the accuracy of {\em DTSemNet}-regression is competitive with the SOTA.

\smallskip
\noindent {\bf Limitations:} {\em DTSemNet} is a DT, making it unsuitable for high-dimensional inputs like images, where DTs struggle with complex shapes and require many leaves, negating their benefits. 

\smallskip
\noindent {\bf Future Work:} For {\em DTSemNet}, the choice of the height of the DT is treated as a hyperparameter, similar to the choice of the number of layers in an NN, in contrast to methods (e.g. CART \citep{breiman1984cart}) that grow trees height. For future work, we will consider developing a regression architecture that does not rely on STE approximations and introduce differentiable methods for tree pruning and adaptive growth.


\begin{ack}

This research was conducted as part of the DesCartes program and was supported by the National Research Foundation, Prime Minister’s Office, Singapore, under the Campus for Research Excellence and Technological Enterprise (CREATE) program. This research/project is also supported by the National Research Foundation, Singapore and DSO National Laboratories under the AI Singapore Programme (AISG Award No: AISG2-RP-2020-017). The computational work for this research was partially performed using resources provided by the NSCC, Singapore.

\end{ack}


\bibliography{m2807}

\appendix
\onecolumn
{\centering{\bfseries \LARGE Supplementary Material \\ 
\vskip 20pt}}

\noindent
This supplementary document expands upon the experimental results reported in the main paper, providing additional details about the datasets, train-val-test splits, and hyperparameters. First, in Section \ref{extra_res}, we provide the extended results of the main paper. Then, in Section \ref{dataset}, we discuss the dataset used, its splits, and sources. Finally, in Section \ref{hyperparams}, we detail the hyperparameters used in the experiment.

\section{Extended Results}\label{extra_res}
In Table \ref{tab:perf_height}, the extended results of Table 1 in the main paper are provided, which includes the percentage test accuracy of DT heights 2, 3, and 4 of all discussed DT learning methods. Similarly, for the RL environment Bipedal Walker, the average reward for heights 6, 7, and 8 are presented in Table \ref{tab:walker}. It can be observed that the performance of DGT and ICCT is impacted by changing the height; however, the performance of {\em DTSemNet} remains stable across different heights. 

\begin{table*}[h!]
    \centering
    \caption{Classification percentage accuracy (higher is better) with the most accurate height $\leq 4$, as tested in \cite{ea2023}. For each dataset, we provide the number of features $N_f$, classes $N_c$, and training samples $N_s$. Averaged accuracy $\pm$ std is reported over 100 runs. The performance of height {2,3,4} is reported, and in case the number of leaves exceeds the number of features $N_f$, the results are excluded. We run the {\em DTSemNet} and DGT experiments, while the results for TAO, CART and CRO-DT are copied from \cite{ea2023}.}
    \begin{tabular}{lllccccc}
    \toprule
    Dataset & $N_f$, $N_c$, $N_s$ & Height &{\em DTSemNet} & DGT \cite{dgt2022} & TAO \cite{carreira2018alternating} & CART \cite{breiman1984cart} & CRO-DT \cite{ea2023} \\
    \midrule 
    
    Balance Scale & 4, 3,625 & \begin{tabular}[c]{@{}l@{}} 2 \\ 3 \\ 4 \end{tabular} & \begin{tabular}[c]{@{}l@{}} $\mathbf{90.2 \pm 2.2}$ \\ $89.8 \pm 2.1$ \\ $90.2 \pm 2.2$ \end{tabular} & \begin{tabular}[c]{@{}l@{}} $86.4 \pm 3.7$ \\ $88.0 \pm 1.8$  \\ $88.6 \pm 1.7$ \end{tabular} &
    \begin{tabular}[c]{@{}l@{}} $66.4 \pm 3.6$ \\ $71.7 \pm 2.9$ \\ $77.4 \pm 3.1$ \end{tabular} & 
    \begin{tabular}[c]{@{}l@{}} $64.7 \pm 3.5$ \\ $68.8 \pm 3.2$ \\ $74.9 \pm 3.6$ \end{tabular} &
    \begin{tabular}[c]{@{}l@{}} $69.2 \pm 2.34$ \\ $73.0 \pm 2.6$ \\ $77.8 \pm 3.0$ \end{tabular} \\
    \midrule
    
    Banknote Auth & 4, 4, 1372 & \begin{tabular}[c]{@{}l@{}} 2 \\ 3 \\ 4 \end{tabular} & \begin{tabular}[c]{@{}l@{}} $99.0 \pm 0.5$ \\ $\mathbf{99.8 \pm 0.4}$ \\ $99.8 \pm 0.3$ \end{tabular} & \begin{tabular}[c]{@{}l@{}} $99.5 \pm 0.8$ \\ $\mathbf{99.8 \pm 0.4}$  \\ $99.8 \pm 0.4$ \end{tabular} & \begin{tabular}[c]{@{}l@{}} $89.8 \pm 1.6$ \\ $94.9 \pm 1.4$ \\ $96.6 \pm 1.3$ \end{tabular} & 
    \begin{tabular}[c]{@{}l@{}} $89.4 \pm 1.7$ \\ $91.8 \pm 2.2$ \\ $93.6 \pm 2.2$\end{tabular} &
    \begin{tabular}[c]{@{}l@{}} $90.4 \pm 2.0$ \\ $92.4 \pm 2.3$ \\ $95.2 \pm 2.2$ \end{tabular} \\
    \midrule

    Blood Transfusion & 4, 2,784 & \begin{tabular}[c]{@{}l@{}} 2 \\ 3 \\ 4 \end{tabular} & \begin{tabular}[c]{@{}l@{}} $\mathbf{78.5 \pm 1.7}$ \\ $78.5 \pm 2.0$ \\ $78.5 \pm 2.1$ \end{tabular} & 
    \begin{tabular}[c]{@{}l@{}} $76.4 \pm 2.4$ \\ $77.8 \pm 2.5$  \\ $78.3 \pm 2.4$ \end{tabular} &
    \begin{tabular}[c]{@{}l@{}} $75.6 \pm 1.7$ \\ $76.9 \pm 2.1$ \\ $76.7 \pm 2.2$ \end{tabular} & 
    \begin{tabular}[c]{@{}l@{}} $76.1 \pm 0.74$ \\ $76.5 \pm 1.4$ \\ $77.1 \pm 1.8$\end{tabular} &
    \begin{tabular}[c]{@{}l@{}} $75.7 \pm 1.7$ \\ $76.1 \pm 1.9$ \\ $76.1 \pm 2.2$\end{tabular} \\
    \midrule
    
    Acute Inflam. 1 & 6, 2, 120 & \begin{tabular}[c]{@{}l@{}} 2 \\ 3 \\ 4 \end{tabular} & \begin{tabular}[c]{@{}l@{}} $\mathbf{100 \pm 0}$ \\ $100 \pm 0.3$ \\ $100 \pm 0$ \end{tabular} & 
    \begin{tabular}[c]{@{}l@{}} $98.7 \pm 0.4$ \\ $100 \pm 0.3$  \\ $\mathbf{100 \pm 0}$ \end{tabular} &
    \begin{tabular}[c]{@{}l@{}} $91.2 \pm 5.1$ \\ $99.7 \pm 1.3$ \\ $99.7 \pm 1.2$ \end{tabular} &
    \begin{tabular}[c]{@{}l@{}} $86.7 \pm 8.3$ \\ $\mathbf{100 \pm 0}$ \\ $100 \pm 0$ \end{tabular} &
    \begin{tabular}[c]{@{}l@{}} $\mathbf{100 \pm 0}$ \\ $100 \pm 0$ \\ $100 \pm 0$ \end{tabular} \\
    \midrule

    Acute Inflam. 2 & 6, 2, 120 & \begin{tabular}[c]{@{}l@{}} 2 \\ 3 \\ 4 \end{tabular} & \begin{tabular}[c]{@{}l@{}} $\mathbf{100 \pm 0}$ \\ $100 \pm 0$ \\ $99.9 \pm 0.7$ \end{tabular} & 
    \begin{tabular}[c]{@{}l@{}} $99.9 \pm 0.7$ \\ $99.9 \pm 1.3$  \\ $\mathbf{100 \pm 0}$ \end{tabular} &
    \begin{tabular}[c]{@{}l@{}} $99.0 \pm 2.6$ \\ $99.0 \pm 2.6$ \\ $99.0 \pm 2.6$ \end{tabular} & \begin{tabular}[c]{@{}l@{}} $99.0 \pm 2.6$ \\ $99.0 \pm 2.6$ \\ $99.0 \pm 2.6$ \end{tabular} &
    \begin{tabular}[c]{@{}l@{}} $\mathbf{100 \pm 0}$ \\ $100 \pm 0$ \\ $100 \pm 0$ \end{tabular} \\
    \midrule

    Car Evaluation & 6, 4, 1728 & \begin{tabular}[c]{@{}l@{}} 2 \\ 3 \\ 4 \end{tabular} & \begin{tabular}[c]{@{}l@{}} $82.9 \pm 1.5 $ \\ $88.4 \pm 2.6$ \\ $\mathbf{93.3 \pm 2.2}$ \end{tabular} & 
    \begin{tabular}[c]{@{}l@{}} $80.0 \pm 3.1$ \\ $88.1 \pm 2.3$  \\ $92.1 \pm 2.4$ \end{tabular} &
    \begin{tabular}[c]{@{}l@{}} $77.8 \pm 1.3$ \\ $79.3 \pm 1.5$ \\ $84.5 \pm 1.5$ \end{tabular} & \begin{tabular}[c]{@{}l@{}} $77.8 \pm 1.3$ \\ $77.8 \pm 1.3$ \\ $84.3 \pm 1.4$ \end{tabular} &
    \begin{tabular}[c]{@{}l@{}} $77.6 \pm 1.3$ \\ $80.5 \pm 1.3$ \\ $86.1 \pm 1.3$ \end{tabular} \\
    \midrule

    Breast Cancer & 9, 2, 683 & \begin{tabular}[c]{@{}l@{}} 2 \\ 3 \\ 4 \end{tabular} & \begin{tabular}[c]{@{}l@{}} $\mathbf{97.2 \pm 1.3}$ \\ $97.2 \pm 1.3$ \\ $97.1 \pm 1.3$ \end{tabular} & 
    \begin{tabular}[c]{@{}l@{}} $\mathbf{97.2 \pm 1.2}$  \\ $97.1 \pm 1.3$ \\ $97.2 \pm 1.0$ \end{tabular} &
    \begin{tabular}[c]{@{}l@{}} $94.2 \pm 1.6$ \\ $94.7 \pm 1.6$ \\ $94.7 \pm 1.7$ \end{tabular} &
    \begin{tabular}[c]{@{}l@{}} $93.8 \pm 1.7$ \\ $94.7 \pm 1.7$ \\ $94.7 \pm 1.8$ \end{tabular} &
    \begin{tabular}[c]{@{}l@{}} $95.5 \pm 1.8$ \\ $95.3 \pm 1.7$ \\ $95.2 \pm 1.6$ \end{tabular} \\
    \midrule

    Avila Bible & 10, 12, 10430 & \begin{tabular}[c]{@{}l@{}} 4 \end{tabular} & \begin{tabular}[c]{@{}l@{}} $\mathbf{62.2 \pm 1.4}$ \end{tabular} & 
    \begin{tabular}[c]{@{}l@{}} $59.7 \pm 1.8$ \end{tabular} &
    \begin{tabular}[c]{@{}l@{}}  $55.8 \pm 0.8$ \end{tabular} &
    \begin{tabular}[c]{@{}l@{}}  $54.0 \pm 1.3$ \end{tabular} &
    \begin{tabular}[c]{@{}l@{}}  $59.6 \pm 0.7$ \end{tabular} \\
    \midrule

    Wine Quality Red & 11, 6, 1599 & \begin{tabular}[c]{@{}l@{}} 3 \\ 4 \end{tabular} & \begin{tabular}[c]{@{}l@{}} $\mathbf{58.6 \pm 2.2}$ \\ $57.3 \pm 2.4$ \end{tabular} & 
    \begin{tabular}[c]{@{}l@{}} $56.4 \pm 2.5$ \\ $56.6 \pm 1.4$ \end{tabular} &
    \begin{tabular}[c]{@{}l@{}}  $56.1 \pm 2.3$ \\ $56.9 \pm 2.5$ \end{tabular} &
    \begin{tabular}[c]{@{}l@{}}  $55.3 \pm 2.1$ \\ $55.9 \pm 2.3$ \end{tabular} &
    \begin{tabular}[c]{@{}l@{}}  $55.7 \pm 2.4$ \\ $54.8 \pm 2.9$ \end{tabular} \\
    \midrule

    Wine Quality White & 11, 7, 4898 & \begin{tabular}[c]{@{}l@{}} 3 \\ 4 \end{tabular} & \begin{tabular}[c]{@{}l@{}} $52.6 \pm 1.1$ \\ $\mathbf{53.5 \pm 1.4}$ \end{tabular} & 
    \begin{tabular}[c]{@{}l@{}} $51.7 \pm 1.8$ \\ $52.1 \pm 1.6$ \end{tabular} &
    \begin{tabular}[c]{@{}l@{}}  $52.0 \pm 1.5$ \\ $52.3 \pm 1.4$ \end{tabular} &
    \begin{tabular}[c]{@{}l@{}}  $51.9 \pm 1.3$ \\ $52.0 \pm 1.3$ \end{tabular} &
    \begin{tabular}[c]{@{}l@{}}  $50.9 \pm 1.4$ \\ $50.7 \pm 1.4$ \end{tabular} \\
    \midrule

    Dry Bean & 16, 7, 13611 & \begin{tabular}[c]{@{}l@{}} 3 \\ 4 \end{tabular} & \begin{tabular}[c]{@{}l@{}} $91.2 \pm 0.6$ \\ $\mathbf{91.4 \pm 0.5}$ \end{tabular} & 
    \begin{tabular}[c]{@{}l@{}} $88.4 \pm 1.6$ \\ $89.0 \pm 1.6$ \end{tabular} &
    \begin{tabular}[c]{@{}l@{}}  $78.2 \pm 1.2$ \\ $83.2 \pm 1.5$ \end{tabular} &
    \begin{tabular}[c]{@{}l@{}}  $76.6 \pm 1.4$ \\ $80.5 \pm 1.9$ \end{tabular} &
    \begin{tabular}[c]{@{}l@{}}  $74.3 \pm 3.3$ \\ $77.9 \pm 4.7$ \end{tabular} \\
    \midrule

    Climate Crashes & 18, 2, 540 & \begin{tabular}[c]{@{}l@{}} 2 \\ 3 \\ 4 \end{tabular} & \begin{tabular}[c]{@{}l@{}} $\mathbf{92.9 \pm 1.4}$ \\ $92.1 \pm 1.0$ \\ $91.9 \pm 0.8$ \end{tabular} & 
    \begin{tabular}[c]{@{}l@{}} $92.3 \pm 2.4$ \\ $92.4 \pm 2.6$  \\ $92.3 \pm 2.5$ \end{tabular} &
    \begin{tabular}[c]{@{}l@{}} $90.6 \pm 2.2$ \\ $90.5 \pm 2.3$ \\ $90.1 \pm 2.6$ \end{tabular} &
    \begin{tabular}[c]{@{}l@{}} $91.3 \pm 1.2$ \\ $91.6 \pm 1.5$ \\ $91.8 \pm 1.8$ \end{tabular} &
    \begin{tabular}[c]{@{}l@{}} $91.3 \pm 2.2$ \\ $91.1 \pm 2.0$ \\ $90.7 \pm 2.6$ \end{tabular} \\
    \midrule

    Conn. Sonar & 60, 2, 208 & \begin{tabular}[c]{@{}l@{}} 2 \\ 3 \\ 4 \end{tabular} & \begin{tabular}[c]{@{}l@{}} $79.9 \pm 5.1$ \\ $81.5 \pm 5.0$ \\ $\mathbf{82.1 \pm 5.1}$ \end{tabular} & 
    \begin{tabular}[c]{@{}l@{}} $75.6 \pm 6.5$ \\ $79.5 \pm 5.9$  \\ $80.8 \pm 5.3$ \end{tabular} &
    \begin{tabular}[c]{@{}l@{}} $68.7 \pm 6.0$ \\ $70.9 \pm 6.1$ \\ $70.9 \pm 5.8$ \end{tabular} &
    \begin{tabular}[c]{@{}l@{}} $69.4 \pm 5.8$ \\ $70.1 \pm 6.3$  \\ $70.6 \pm 6.6$ \end{tabular} &
    \begin{tabular}[c]{@{}l@{}} $72.8 \pm 5.9$ \\ $71.4 \pm 5.7$ \\ $72.6 \pm 5.7$ \end{tabular} \\
    \midrule
    
    Optical Recognition & 64, 10, 3823 & \begin{tabular}[c]{@{}l@{}} 4 \end{tabular} & \begin{tabular}[c]{@{}l@{}} $\mathbf{93.3 \pm 1.0}$ \end{tabular} & 
    \begin{tabular}[c]{@{}l@{}} $91.9 \pm 1.0$ \end{tabular} &
    \begin{tabular}[c]{@{}l@{}}  $64.6 \pm 6.5$ \end{tabular} & 
    \begin{tabular}[c]{@{}l@{}}  $53.2 \pm 3.2$ \end{tabular} &
    \begin{tabular}[c]{@{}l@{}}  $65.2 \pm 2.0$ \end{tabular} \\

    \bottomrule
    \end{tabular}
    \label{tab:perf_height}
\end{table*}

\begin{table}[h!]
\caption{Rewards in the Bipedal Walker environment with tree heights of 6, 7, and 8. The number of parameters for the NN is similar to the DT height, with the size of the hidden layer specified in square brackets [] alongside the reward. In the constant regressor, there is only one learnable parameter per leaf, independent of input. Conversely, in linear regression, each leaf represents a linear regression model.}
\label{tab:walker}
\centering
\begin{tabular}{lccccc}
\toprule
Env & Height & {\em DTSemNet} & Deep RL & DGT \cite{dgt2022} & ICCT \cite{icct2022} \\
\midrule
\multirow{5}{*}{Bipedal Walker} & $6$ & $309.7 \pm 4.49$ & $221.3 \pm 40.12$ $[74 \times 74]$ & \begin{tabular}[c]{@{}l@{}} {\em scalar}:  $47.87 \pm 48.29$\\ {\em linear}: $120.91 \pm 76.9$\end{tabular} & $\mathbf{301.34 \pm 3.09}$ \\ \cline{2-6}
 & $7$ & $\mathbf{314.98 \pm 3.35}$ & $297.4 \pm 6.25$ $[110 \times 110]$ & \begin{tabular}[c]{@{}l@{}} {\em scalar}: $32.67 \pm 23.65$\\ {\em linear}: $163 \pm 69.53$\end{tabular} & $176.64 \pm 71.78$ \\ \cline{2-6}
 & $8$ & $306.72 \pm 5.6$ & $\mathbf{315.3 \pm 6.91}$ $[160 \times 160]$ & \begin{tabular}[c]{@{}l@{}} {\em scalar}: $78.33 \pm 57.19$\\ {\em linear}: $\mathbf{244.5 \pm 61.84}$\end{tabular} & $112.24 \pm 60.36$ \\
 \bottomrule
\end{tabular}
\end{table}

\noindent
\textbf{Loss Landscape}: We present the loss landscapes of {\em DTSemNet} and DGT-Linear, similar to those provided in the main paper, using a regression dataset (Ailerons) for demonstration purposes. As depicted in Figure \ref{fig:loss_ls_ailerons}, the loss landscape of {\em DTSemNet} appears slightly flatter compared to that of DGT-Linear. This difference in flatness is less pronounced than what was observed in the loss landscape of MNIST, mainly because the performance metrics are very close in this case — specifically, with an RMSE of $1.66$ for {\em DTSemNet} and $1.67$ for DGT-Linear. Given the minimal performance gap, observing some flatness in the loss landscape for {\em DTSemNet} underscores the impact of architecture on performance.

\begin{figure}[h!]
  \centering
      \includegraphics[scale=0.30]{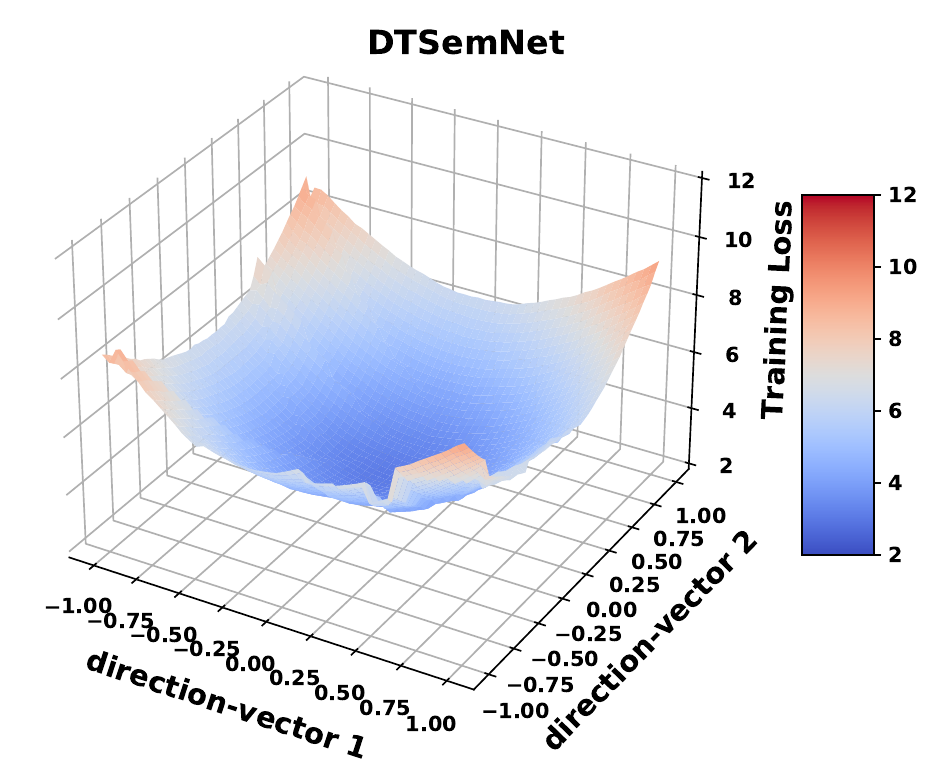}
      \includegraphics[scale=0.30]{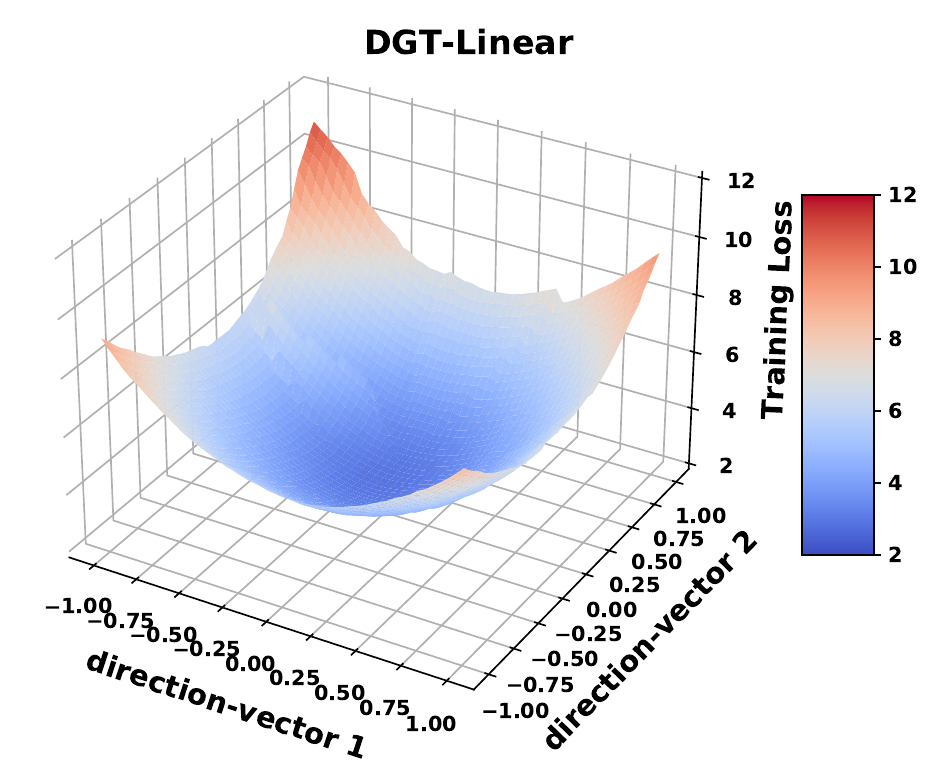}
      \caption{The loss landscape of {\em DTSemNet} and DGT-Linear on  Ailerons regression dataset, when varying the parameters around the trained parameters along two random directions. The flatter the loss landscape, the better the generalization.}
      \label{fig:loss_ls_ailerons}
\end{figure}

\section{Dataset Description}\label{dataset}
This section describes the data used in the experiment, including datasets for classification, regression, and the RL environment. All datasets were preprocessed by standardizing the input features with a mean of 0 and a standard deviation of 1. This normalization procedure was also applied to the target variables in regression datasets. We reuse the explanations of the datasets from \cite{dgt2022}. We used the train-test split provided with the datasets whenever possible, details of which are provided in the tables.

\subsection{Classification}
The 14 classification datasets from \cite{ea2023} shown in Table \ref{tab:cd2} which is sourced from UCI\footnote{\url{https://archive.ics.uci.edu/datasets}}. The 8 classification datasets from \cite{carreira2018alternating} as shown in Table \ref{tab:cd1} is from sourced from LIBSVM\footnote{\url{https://www.csie.ntu.edu.tw/~cjlin/libsvmtools/datasets/}}. 

\begin{table}[!ht]
    \centering
    \caption{Details of classification dataset used in the experiment from UCI.}
    \begin{tabular}{lllll}
    \toprule
        Dataset & train samples & \# features & \# classes & splits (tr:val:test)\\
        \midrule
        Balance Scale & 625 & 4 & 3 & 0.5 : 0.25 : 0.25 \\
        Banknote Auth & 1372 & 4 & 4 & 0.5 : 0.25 : 0.25 \\
        Blood Transfusion & 784 & 4 & 2 & 0.5 : 0.25 : 0.25 \\
        Acute Inflam. 1 & 120 & 6 & 2 & 0.5 : 0.25 : 0.25  \\
        Acute Inflam. 2 & 120 & 6 & 2 & 0.5 : 0.25 : 0.25  \\
        Car Evaluation & 1728 & 6 & 4 & 0.5 : 0.25 : 0.25 \\
        Breast Cancer & 683 & 9 & 2 & 0.5 : 0.25 : 0.25 \\
        Avila Bible & 10430 & 10 & 12 & 0.5 : 0.25 : 0.25 \\
        Wine Quality Red & 1599 & 11 & 6 & 0.5 : 0.25 : 0.25 \\
        Wine Quality White & 4898 & 11 & 7 & 0.5 : 0.25 : 0.25 \\
        Dry Bean & 13611 & 16 & 7 & 0.5 : 0.25 : 0.25  \\
        Climate Crashes & 540 & 18 & 2 & 0.5 : 0.25 : 0.25 \\
        Conn. Sonar & 208 & 60 & 2 & 0.5 : 0.25 : 0.25 \\
        Optical Recognition & 3823 & 64 & 10 & 0.5 : 0.25 : 0.25 \\
        \bottomrule   
    \end{tabular}
    \label{tab:cd2}
\end{table}

\begin{table}[!ht]
    \centering
    \caption{Details of classification dataset used in the experiment from LIBSVM.}
    \begin{tabular}{lllll}
    \toprule
        Dataset & train samples & \# features & \# classes & splits (tr:val:test) \\
        \midrule
        Protein & 14,895 & 357 & 3 & Default \\
        SatImages & 3,104 & 36 & 6 & Default  \\
        Segment & 1,478 & 19 & 7 & 0.64 : 0.16 : 0.2 \\
        PenDigits & 5,995 & 16 & 10 & Default  \\
        Connect4 & 43,236 & 126 & 3 & 0.64 : 0.16 : 0.2  \\  
        MNIST & 48,000 & 780 & 10 & Default \\ 
        SensIT & 63,058 & 100 & 3 & Default \\
        Letter & 10,500 & 16 & 26 & Default \\
         
        \bottomrule   
    \end{tabular}
    \label{tab:cd1}
\end{table}

\begin{itemize}
\item \textbf{Balance Scale}: This data set was generated to model psychological experimental results.  Each example is classified as having the balance scale tip to the right, tip to the left, or be balanced.  The attributes are the left weight, the left distance, the right weight, and the right distance.
\item \textbf{Banknote Auth}: The Banknote Authentication dataset includes features extracted from images of genuine and forged banknotes, with the objective of authenticating banknotes using machine learning techniques.
\item \textbf{Blood Transfusion}: Blood Transfusion dataset contains information about blood donors and their donation history, aiming to predict whether a donor will donate blood again in the future to assist blood donation management.
\item \textbf{Acute Inflam. 1}: The data was created by a medical expert as a data set to test the expert system, which will perform the presumptive diagnosis of two diseases of the urinary system.
\item \textbf{Acute Inflam. 2}: Similar to Acute Inflam. 1, this dataset also deals with cases of acute inflammation, providing additional samples for analysis and classification tasks.
\item \textbf{Car Evaluation}: Car Evaluation dataset involves features of various car models, with the objective of classifying cars into different categories based on their characteristics, such as price, maintenance cost, and safety features.
\item \textbf{Breast Cancer}: The Breast Cancer dataset provides medical data from breast cancer patients, with the goal of accurately classifying tumors as malignant or benign to aid in diagnosis and treatment planning.
\item \textbf{Avila Bible}: The Avila data set has been extracted from 800 images of the 'Avila Bible,' an XII-century giant Latin copy of the Bible. The prediction task consists of associating each pattern with a copyist.
\item \textbf{Wine Quality Red}: Wine Quality Red dataset provides chemical properties of red wine samples, aiming to predict wine quality ratings based on these properties to assist in wine production and quality control.
\item \textbf{Wine Quality White}: The Wine Quality White dataset provides chemical properties of white wine samples, similar to the Wine Quality Red dataset, aiming to predict wine quality ratings based on these properties.
\item \textbf{Dry Bean}: The Dry Bean dataset includes features related to various dry bean varieties, with the objective of predicting bean variety classifications based on these features to assist in agricultural research and crop management.
\item \textbf{Climate Crashes}: Given Latin hypercube samples of 18 climate model input parameter values, predict climate model simulation crashes and determine the parameter value combinations that cause the failures.
\item \textbf{Conn. Sonar}: The Conn. Sonar dataset involves sonar signals bounced off a metal cylinder and a roughly cylindrical rock, aiming to discriminate between these two classes for underwater object detection.
\item \textbf{Optical Recognition}: The Optical Recognition dataset contains handwritten digit images, aiming to recognize and classify digits accurately to facilitate automated postal sorting and document processing tasks.
\end{itemize}

\begin{itemize}
    \item \textbf{Protein}: Given features extracted from amino acid sequences, predict the secondary structure for a protein sequence.
    \item \textbf{Segment}: Given 19 attributes for each 3x3 pixel grid of instances drawn randomly from a database of 7 outdoor color.
    images, predict segmentation of the central pixel.
    \item \textbf{SatImages}: Given multi-spectral values of pixels in 3x3 neighborhoods in a satellite image, predict the central pixel in each neighborhood.
    \item \textbf{PenDigits}: Given integer attributes of pen-based handwritten digits, predict the digit.
    \item \textbf{Connect4}: Given legal positions in a game of connect-4, predict the game theoretical outcome for the first player.
    
    \item \textbf{MNIST}: Given 784 pixels in handwritten digit images, predict the actual digit.
    
    \item \textbf{SensIT}: Given 100 relevant sensor features, classify the vehicles.
    \item \textbf{Letter}: Given 16 attributes obtained from stimulus observed from handwritten letters, classify the actual letters.
\end{itemize}

\subsection{Regression Dataset}

We used scalar regression datasets from \cite{dgt2022} shown in Table \ref{tab:reg}. We used the train-test split provided with the datasets whenever possible. When a default split is not provided, we create five different splits and evaluate each of the splits 10 times to obtain the average performance. Then, we average the performance across five splits. Following is the list of datasets:
\begin{itemize}
    
    \item \textbf{Abalone}: Given attributes describing physical measurements, predict the age of an abalone. We encode the categorical (“sex") attribute as one-hot.
    \item \textbf{Comp-Activ}: Given different system measurements, predict the portion of time that CPUs run in user mode.
    \item \textbf{Ailerons}: Given attributes describing the status of the aircraft, predict the command given to its ailerons.
    \item \textbf{CtSlice}: Given attributes as histogram features (in polar space) of the Computer Tomography (CT) slice, predict the relative location of the image on the axial axis (in the range [0 180]).
    \item \textbf{YearPred}: Given several song statistics/metadata (timbre average, timbre covariance, etc.), predict the age of the song. This is a subset of the UCI Million Songs dataset.
    \item \textbf{PdbBind}: Given standard “grid features" (fingerprints of pairs between ligand and protein; see Wu et al. (2018)), predict binding affinities.
    \item \textbf{Microsoft}: Given 136-dimensional feature vectors extracted from query-url pairs, predict the relevance judgment labels, which take values from 0 (irrelevant) to 4 (perfectly relevant).
\end{itemize}

\begin{table}[h]
    \centering
    \caption{Details of regression datasets used from \cite{dgt2022}}
    \begin{tabular}{llllll}
    \toprule
        Dataset & \# features & Train Size & Splits (tr:val:test) & \# shuffles & Source \\ \midrule
        Abalone & 10 & 2,004 & 0.5:0.1:0.4 & 5 & UCI \\ 
        Comp-Activ & 21 & 3,932 & 0.5:0.1:0.4 & 5 & Delve \tablefootnote{\url{https://www.cs.toronto.edu/~delve/data/comp-activ/desc.html}} \\ 
        Ailerons & 40 & 5,723 & Default & 1 & LIACC \tablefootnote{\url{https://www.dcc.fc.up.pt/~ltorgo/Regression/ailerons.html}} \\ 
        CTSlice & 384 & 34,240 & 0.5 : 0.1 : 0.4 & 5 & UCI \\
        YearPred & 90 & 370,972 & Default & 1 & UCI \\
        \midrule
        PDBBind & 2,052 & 9,013 & Default & 1 & MoleculeNet4 \tablefootnote{\url{http://www.pdbbind.org.cn/}} \\ 
        Microsoft & 136 & 578,729 & Default & 1 & MSLR-WEB10K \tablefootnote{\url{https://www.microsoft.com/en-us/research/project/mslr/}} \\       
        \bottomrule
    \end{tabular}
    \label{tab:reg}
\end{table}

\subsection{RL Environments}
In the RL setting 4 discrete environments: CartPole, Acrobot, Lunar Lander and FindAndDefeatZerglings and 2 continuous environments: Continous Lunar Lander and Bipedal walker is used. Following are the details of the environment:

\begin{itemize}
    \item \textbf{CartPole}: Balancing a pole on a cart, the environment has a continuous state space of four dimensions and a discrete action space of two dimensions, with rewards of +1 for each timestep the pole remains upright and 0 upon termination.
    
    \item \textbf{Acrobot}: Involving a two-link pendulum system, the environment features a continuous state space of four dimensions and a discrete action space of three dimensions, with a reward of -1 upon termination.
    
    \item \textbf{LunarLander}: Simulating lunar landing, this environment has a continuous state space of eight dimensions and a discrete action space of four dimensions, with positive rewards for successful landings and negative rewards for crashes.
    
    \item \textbf{FindAndDefeatZerglings}: Tasked with navigating a maze and battling Zerglings, the environment has a discrete state space of 32 dimensions and a discrete action space of 30 dimensions, with positive rewards for defeating enemies and negative rewards for taking damage.
    
    \item \textbf{Continuous Lunar Lander}: Similar to LunarLander but with continuous action space, the environment has a continuous state space of eight dimensions and a continuous action space with two dimensions, with positive rewards for successful landings and crash penalties.
    
    \item \textbf{Bipedal Walker}: Featuring a two-legged robot learning to walk, the environment has a continuous state space of 24 dimensions and a continuous action space of four dimensions, with positive rewards for making progress and penalties for falling or deviating from the desired trajectory.
\end{itemize}

\section{Implementation Details}\label{hyperparams}
In this section, we outline the implementation details for the experiments and provide the sources of the original implementation from which we derived the code.

\medskip
\noindent
\textbf{{\em DTSemNet}}: The implementation of {\em DTSemNet} is done using \texttt{nn.module} class of PyTorch. For the classification and regression datasets, we provide hyperparameters such as optimizer (SGD, RMSProp, or Adam), learning rate (ls), learning momentum (mtm), type of scheduler, scheduler decay parameter, L1 regularizer parameter $\lambda$, batch size, gradient clipping (grad clip) and over-parameterization (overparams) used during training. The over-parameterization is indicated by the size of the used hidden layer for a linear layer. These hyperparameters are tuned using grid search, which is summarized in Tables \ref{tab:hyperparams}. Configurations that are not relevant are available in the configuration files provided in the source code. Furthermore, we provide the hyperparameters used for training agents in the RL setup for six different environments in Tables \ref{para_cart_acr}, \ref{para_ln_zerg} and \ref{para_cont}. These hyperparameters include the choice of optimizer and various parameters for the PPO learner. We obtained these hyperparameters for all agents that use PPO from Stablebaselines through an automated hyperparameter search using the ``Optuna'' library. Optuna performs a Bayesian optimization-based automatic hyperparameter search to find suitable parameters. 

\medskip
\noindent
\textbf{VIPER}: We utilize the online implementation available at \url{https://github.com/Safe-RL-Team/viper-verifiable-rl-impl}. VIPER requires Q-values to train a decision tree. In cases where the policy network is available instead of the Q-network, the authors of VIPER proposed using $\log(\pi)$ for the policy output $\pi$ in place of the Q-value, as it is proportional to it \cite{bastani2018verifiable}.

\medskip
\noindent
\textbf{ICCT}: We use the implementation provided at \url{https://github.com/CORE-Robotics-Lab/ICCT/tree/main} from the authors of ICCT \cite{icct2022} for the architecture of DT. We reuse their code to replace ICCT with {\em DTSemNet} to obtain the results in RL tasks. 

\medskip
\noindent
\textbf{DGT}: We use the implementation available at \url{https://github.com/microsoft/DGT} from the authors of DGT \cite{dgt2022} for our experiments.

\medskip
\noindent
\textbf{CRO-DT}: We use the implementation available at \url{https://github.com/vgarciasc/CRO-DT} from the authors of CRO-DT \cite{ea2023} for the experiments.

\begin{table}[]
\centering
\caption{Hyperparameters used for Experiments with classification and regression datasets. Reported hyperparameters are epoch, optimizer (SGD, RMSProp, or Adam), learning rate (lr), learning momentum (mtm), scheduler type (linear or cosine), scheduler decay parameter, L1 regularizer parameter $\lambda$, batch size, gradient clipping (grad clip) and over-parameterization (overparams).}
\label{tab:my-table}
\begin{tabular}{lllllllllll}
\toprule
Dataset & epoch & optimizer & lr & mtm & \begin{tabular}[c]{@{}l@{}}scheduler\\ type\end{tabular} & \begin{tabular}[c]{@{}l@{}}scheduler\\ decay\end{tabular} & batch size & $\lambda$ & grad clip & overparams \\
\midrule
Segment & 50 & Adam & 0.01 & NA & Linear & 0.95 & 128 & NA & NA & [1530] \\
SatImages & 200 & RMSProp & 0.01 & 0.2 & Linear & 0.98 & 128 & 5e-5 & 0.01 & [4032] \\
PenDigits & 100 & RMSProp & 0.01 & 0.0 & Linear & 0.98 & 128 & 1e-5 & 0.01 & [16320] \\
Letter & 400 & RMSProp & 0.01 & 0.0 & Linear & 0.95 & 128 & 5e-6 & 0.01 & [6128x6138] \\
Protein & 40 & RMSProp & 0.01 & 0.0 & Linear & 0.98 & 128 & 1e-3 & 0.01 & [180x180] \\
Connect4 & 100 & RMSProp & 0.01 & 0.0 & Linear & 0.95 & 128 & 1e-4 & 0.01 & [8160] \\
MNIST & 100 & SGD & 0.4 & 0.9 & Linear & 0.95 & 128 & NA & NA & NA \\
SensIT & 250 & RMSProp & 0.01 & 0.0 & Linear & 0.95 & 128 & 1e-4 & 0.01 &[6138] \\
\midrule

Acute Inflam. 1 & 20 & Adam & 0.8 & NA & Linear & 0.99 & 128 & NA & NA & NA \\
Acute Inflam. 2 & 30 & Adam & 0.7 & NA & Linear & 0.98 & 128 & NA & NA & NA \\
Conn. Solar & 40 & Adam & 0.1 & NA & Linear & 0.95 & 128 & 1e-3 & NA & NA \\
Climate Crashes & 30 & Adam & 0.4 & NA & Linear & 0.90 & 128 & 1e-2 & NA & NA \\
Balance Scale & 20 & Adam & 0.8 & NA & Linear & 0.98 & 128 & NA & NA & NA \\
Breast Cancer & 50 & Adam & 0.05 & NA & Linear & 0.90 & 256 & 1e-2 & NA & NA \\
Blood Transfusion & 50 & Adam & 0.7 & NA & Linear & 0.95 & 128 & NA & NA & NA \\
Banknote Auth & 40 & Adam & 0.5 & NA & Linear & 0.98 & 128 & NA & NA & NA \\
Wine Quality Red & 60 & Adam & 0.4 & NA & Linear & 0.98 & 128 & NA & NA & NA \\
Car Evaluation & 40 & Adam & 0.8 & NA & Linear & 0.99 & 128 & NA & NA & NA \\
Optical Recognition & 100 & Adam & 0.8 & NA & Linear & 0.95 & 128 & 1e-3 & NA & NA \\
Wine Quality White & 60 & Adam & 0.1 & NA & Linear & 0.99 & 128 & NA & NA & NA \\
Avila Bible & 100 & Adam & 0.3 & NA & Linear & 0.98 & 128 & NA & NA & NA \\
Dry Bean & 40 & Adam & 0.1 & NA & Linear & 0.99 & 128 & 1e-3 & NA & NA \\

\midrule
Ailerons & 100 & Adam & 0.1 & NA & Linear & 0.90 & 32 & 5e-4 & 0.01 & NA \\
Abalone & 50 & Adam & 0.005 & NA & Linear & 0.95 & 32 & 5e-4 & NA & [248] \\
Comp-Activ & 100 & Adam & 0.02 & NA & Linear & 0.90 & 128 & 1e-4 & NA & [186] \\
CTSlice & 50 & Adam & 0.001 & NA & Linear & 0.90 & 128 & NA & NA & [992x496] \\
PDBBind & 100 & Adam & 0.01 & NA & Linear & 0.95 & 256 & 1e-2 & NA & NA \\
YearPred & 40 & Adam & 0.001 & NA & Linear & 0.90 & 128 & 1e-4 & NA & [378] \\
Microsoft & 30 & Adam & 0.002 & NA & Linear & 0.90 & 256 & NA & NA & NA \\
\bottomrule
\end{tabular}
\label{tab:hyperparams}
\end{table}

\begin{table}[h]
  \centering
  \caption{Hyperparameters of agents in CartPole (left) and Acrobot (right) Environment.}
  \begin{minipage}{0.49\textwidth}
    \resizebox{\linewidth}{!}{
      \begin{tabular}{lccccc}
        \hline
        Parameters & {\em DTSemNet} & NN & DGT & ICCT  \\
        \hline
        Size (height/nodes)   & 4  & 16x16 & 4 & 4   \\
        Learning Rate & 0.024 & 0.015 & 0.0042 & 0.024 \\
        Batch Size & 32 & 32 & 43 & 32  \\
        Num Steps & 832 & 512 & 542 & 832 \\
        Entropy Coeff & 0.02 & 0.008  & 0.02 & 0.02 \\
        Clip Range & 0.2 & 0.2 & 0.2 & 0.2 \\
        VF Coeff & 0.5 & 0.5 & 0.5 \\
        GAE Lamda & 0.86 & 0.96  & 0.934 & 0.86 \\
        Max Grad Norm & 0.5 & 0.5  & 0.53 & 0.5 \\
        Num Epochs & 20 & 12 &  7 & 20 \\
        Critic Network & \multicolumn{4}{c}{ --------- \quad 16x16 \quad---------} \\
        Optimizer & \multicolumn{4}{c}{ --------- \quad Adam \quad ---------} \\
        Discount Factor & \multicolumn{4}{c}{ --------- \quad 0.99 \quad ---------} \\
        \hline
        \end{tabular}
    }
  \end{minipage}
  \hfill
  \begin{minipage}{0.49\textwidth}
    \resizebox{\linewidth}{!}{
      \begin{tabular}{lccccc}
        \hline
        Parameters & {\em DTSemNet} & NN & DGT & ICCT   \\
        \hline
        Size (height/nodes)   & 4  & 16x16  & 4 & 4    \\
        Learning Rate & 0.045 & 0.007 & 0.001 & 0.045  \\
        Batch Size & 64 & 64 & 128 & 64  \\
        Num Steps & 1024 & 1024 & 2048 & 1024  \\
        Entropy Coeff & 0.01 & 0.02  & 0.012 & 0.01  \\
        Clip Range & 0.2 & 0.2 & 0.2 & 0.2  \\
        VF Coeff & 0.5 & 0.5 & 0.54 & 0.5  \\
        GAE Lamda & 0.80 & 0.85  & 0.94 & 0.80 \\
        Max Grad Norm & 0.52 & 0.5 & 0.5 & 0.52 \\
        Num Epochs & 16 & 16 & 20 & 16  \\
        Critic Network & \multicolumn{4}{c}{ --------- \quad 32x32 \quad---------} \\
        Optimizer & \multicolumn{4}{c}{ --------- \quad Adam \quad ---------} \\
        Discount Factor & \multicolumn{4}{c}{ --------- \quad 0.99 \quad ---------} \\
        \hline
        \end{tabular}

    }
  \end{minipage}
  
  \label{para_cart_acr}
\end{table}

\begin{table}[h]
\caption{Hyperparameters of agents in Lunar Lander (left) and FindAndDestroy (right) Environment.}
  \centering
  \begin{minipage}{0.49\textwidth}
    \resizebox{\linewidth}{!}{
        \begin{tabular}{lccccc}
        \hline
        Parameters & {\em DTSemNet} & NN & DGT & ICCT  \\
        \hline 
        Size (height/nodes)  & 5 & 32x32 & 5 & 5    \\
        Learning Rate & 0.02 & 0.005 & 0.02 & 0.005 \\
        Batch Size & \multicolumn{4}{c}{ --------- \quad 64 \quad---------}  \\
        Num Steps & 2048 & 2048 & 2048 & 2048 \\
        Entropy Coeff & \multicolumn{4}{c}{ --------- \quad 0.01 \quad---------} \\
        Clip Range & \multicolumn{4}{c}{ --------- \quad 0.2 \quad---------} \\
        VF Coeff & \multicolumn{4}{c}{ --------- \quad 0.5 \quad---------} \\
        GAE Lamda & 0.9 & 0.9  & 0.9 & 0.9  \\
        Max Grad Norm & \multicolumn{4}{c}{ --------- \quad 0.5 \quad---------}\\
        Num Epochs & 20 & 10 & 4 & 32 \\
        Critic Network & \multicolumn{4}{c}{ --------- \quad 32x32 \quad---------} \\
        Optimizer & Adam & Adam & Adam & Adam \\
        Discount Factor & \multicolumn{4}{c}{ --------- \quad 0.99 \quad ---------} \\
        \hline
        \end{tabular}
    }
  \end{minipage}
  \hfill
  \begin{minipage}{0.49\textwidth}
    \resizebox{\linewidth}{!}{
        \begin{tabular}{lccccc}
        \hline
        Parameters & {\em DTSemNet} & NN & DGT & ICCT  \\
        \hline 
        Size (height/nodes)  & 6 & 64x64 & 6 & 6 \\
        Learning Rate & 0.02 & 0.005 & 0.02 & 0.005 \\
        Batch Size & \multicolumn{4}{c}{ --------- \quad 64 \quad---------}  \\
        Num Steps & 2048 & 2048 & 2048 & 2048 \\
        Entropy Coeff & \multicolumn{4}{c}{ --------- \quad 0.01 \quad---------} \\
        Clip Range & \multicolumn{4}{c}{ --------- \quad 0.2 \quad---------} \\
        VF Coeff & \multicolumn{4}{c}{ --------- \quad 0.5 \quad---------} \\
        GAE Lamda & 0.9 & 0.9  & 0.9 & 0.9  \\
        Max Grad Norm & \multicolumn{4}{c}{ --------- \quad 0.5 \quad---------}\\
        Num Epochs & 10 & 10 & 4 & 4 \\
        Critic Network & \multicolumn{4}{c}{ --------- \quad 128x128 \quad---------} \\
        Optimizer & \multicolumn{4}{c}{ --------- \quad Adam \quad ---------} \\
        Discount Factor & \multicolumn{4}{c}{ --------- \quad 0.99 \quad ---------} \\
        \hline
        \end{tabular}
    }
  \end{minipage}
  
  \label{para_ln_zerg}
\end{table}

\begin{table}[h]
\caption{Hyperparameters of agents in ContinousLunarLander Environment (left) and Bipedal Walker (right).}
  \centering
  \begin{minipage}{0.48\textwidth}
    \resizebox{\linewidth}{!}{
        \begin{tabular}{lccccc}
\hline
Parameters & {\em DTSemNet} & NN & DGT & ICCT  \\
\hline 
Size (height/nodes)  & 4 & 16x16 & 4 & 4   \\
Learning Rate & 1e-3 & 1e-4  & 1e-3 & 1e-4 \\
Batch Size & \multicolumn{4}{c}{ --------- \quad 256 \quad---------}  \\
Tau & \multicolumn{4}{c}{ --------- \quad 0.01 \quad---------}  \\
Train Freq & \multicolumn{4}{c}{ --------- \quad 1 \quad---------}  \\
Grad Steps & \multicolumn{4}{c}{ --------- \quad 1 \quad---------}  \\
Critic Network & \multicolumn{4}{c}{ --------- \quad 256x256 \quad---------} \\
Optimizer & \multicolumn{4}{c}{ --------- \quad Adam \quad---------} \\
Discount Factor & \multicolumn{4}{c}{ --------- \quad 0.99 \quad ---------} \\
\hline
\end{tabular}
    }
  \end{minipage}
  \hfill
  \begin{minipage}{0.5\textwidth}
    \resizebox{\linewidth}{!}{
        \begin{tabular}{lccccc}
\hline
Parameters & {\em DTSemNet} & NN & DGT & ICCT  \\
\hline 
Size (height/nodes)  & 7 & 160x160 & 8 & 6    \\
Learning Rate & 2e-3 & 5e-4  & 2e-3 & 1e-3 \\
Batch Size & \multicolumn{4}{c}{ --------- \quad 256 \quad---------}  \\
Tau & \multicolumn{4}{c}{ --------- \quad 0.02 \quad---------}  \\
Train Freq & \multicolumn{4}{c}{ --------- \quad 64 \quad---------}  \\
Grad Steps & \multicolumn{4}{c}{ --------- \quad 64 \quad---------}  \\
Critic Network & \multicolumn{4}{c}{ --------- \quad 512x512 \quad---------} \\
Optimizer & \multicolumn{4}{c}{ --------- \quad Adam \quad---------} \\
Discount Factor & \multicolumn{4}{c}{ --------- \quad 0.98 \quad ---------} \\
\hline
\end{tabular}
    }
  \end{minipage}
  
  \label{para_cont}
\end{table}

\end{document}